%% file: arxiv_2026.tex
\documentclass{article}
\usepackage{stmaryrd}

\usepackage{arxiv}

\usepackage{doi}

\usepackage[utf8]{inputenc} 
\usepackage[T1]{fontenc}    
\usepackage{hyperref}       
\usepackage{url}            
\usepackage{booktabs}       
\usepackage{amsfonts}       
\usepackage{nicefrac}       
\usepackage{microtype}      
\usepackage{xcolor}         
\usepackage{amsmath, amsthm}
\usepackage{stmaryrd}
\usepackage{enumitem}
\usepackage{color} 
\usepackage{soul}
\usepackage{bbm}
\usepackage{amssymb}
\usepackage{graphicx}
\usepackage{natbib}
\usepackage[textsize=tiny, color=green!10!white]{todonotes}
\usepackage{bm}
\usepackage{thmtools}

\newtheorem{lemma}{Lemma}

\usepackage{etoolbox}

\title{Universal Transformers for Circuit Computations:
Perfect Length Generalization in Tiny Transformers}

\renewcommand{\headeright}{}
\renewcommand{\undertitle}{}
\renewcommand{\shorttitle}{Universal Transformers for Circuit Computations}

\author{%
  Takuya Ito\thanks{takuya.ito@ibm.com}\\
  IBM Research\\
  \And
  Ruchir Puri \\
  IBM Research\\
  \And
  Murray Campbell \\
  IBM Research\\
  \And
  Parikshit Ram \\
  IBM Research\\
}

\begin{document}

\maketitle

\begin{abstract}
Learning generalizable algorithmic computations remains a challenge for neural networks, as reflected in persistent failures on compositional and length generalization benchmarks. 
We present a provably correct, transformer parameterization (with only 280 learnable parameters for Boolean algebra tasks) capable of learning and evaluating problems of {\em any depth or length}. 
We assume inputs are fully parenthesized, well-formed expressions.
Our approach conceptualizes algorithmic tasks as circuit models embedded in transformers, enabling depth-1 circuit reduction in a single forward pass.
To achieve depth generalization, we introduce a positional encoding that tracks each gate's depth within the circuit, enabling the model to identify evaluable subexpressions at each iteration via masked hard attention, with $O(n)$ per-iteration complexity via linear attention.
Combined with an autonomous halting criterion, the model terminates after $d$ iterations for problems of depth $d$, yielding $O(n \cdot d)$ total complexity.
We show that training on shallow problem instances (depth 1 and depth 2) effectively recovers interpretable parameters that {\em snap} into place, resulting in exact length generalization. 
Though we establish that our construction provably evaluates Boolean expressions -- a universal symbolic computation -- of arbitrary length perfectly, in other experiments we also demonstrate that our transformer variant can learn and generalize perfectly (100\% accuracy) on other common length generalization benchmarks, including modular arithmetic and ListOps.

\end{abstract}

\keywords{universal transformer \and length generalization \and circuits \and mechanistic interpretability}

\section{Introduction}
\label{sec:introduction}

Understanding how transformers can execute exact symbolic computations remains a challenge in artificial intelligence.
Standard transformer architectures have struggled with symbolic generalization \citep{lake_generalization_2018}, particularly when generalizing to longer input sequences \citep{hupkes_compositionality_2020,dziri_faith_2023,zhou_what_2024}.
Although recent works have reported substantial progress on algorithmic benchmarks -- such as arithmetic reasoning and formal language recognition -- and in some cases have identified interpretable internal representations \citep{he_learning_2024,cho_arithmetic_2025,fan_looped_2025,adler_towards_2025}, these advances often rely on task-specific heuristics and lack guarantees of exact algorithmic correctness outside the training distribution. 
Thus, strong benchmark performance does not ensure that a transformer learns a generalizable algorithm.

We first address this by studying transformers on Boolean formulas -- a canonical family of compositional computations that can be represented as circuits \citep{arora_computational_2009}.
Circuits provide explicit algorithmic structure -- gates, edges, and intermediate values -- and naturally scale to arbitrary sizes (Fig. \ref{fig:problem}).
Many string-based tasks (e.g., modular arithmetic and context-free grammars) can be mapped to circuits, yielding a unified notion of algorithmic generalization via a graph-based structure \citep{ito_quantifying_2025}.
Here, we parameterize by hand a fixed, small transformer (280 parameters) that provably evaluates arbitrary non-uniform Boolean expressions via a circuit reduction procedure.
When randomly initialized, the model can learn to generalize perfectly from depth-1 and depth-2 samples.
This behavior is enabled by a novel positional encoding (PE) mechanism that infers circuit depth from a parenthesized string and identifies evaluable subexpressions at each step.
The model is also computationally efficient: attention scales linearly in memory, and the number of iterations scales approximately logarithmically in input length, suggesting that exact algorithmic computation in transformers can be realized without sacrificing efficiency or parameter compactness.

\begin{figure}[h]
  \centering
  \includegraphics[scale=1.1]{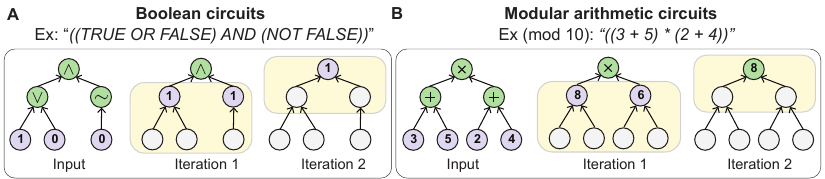}
  \caption{
  Example A) Boolean and B) modular arithmetic expressions, encoded as circuits. 
  Circuits provide a structured algorithm to evaluate an expression by following the edges from the input gates to the output gate.
  We illustrate circuit evaluation through circuit reduction, whereby each operator gate is iteratively updated by the subexpression's computed value until the output is obtained.
  }
  \label{fig:problem}
\end{figure}

To demonstrate generality beyond Boolean formulas, we show that the same architecture learns and perfectly generalizes to other common compositional tasks, such as modular arithmetic and ListOps \citep{nangia_listops_2018} from shallow training data.
Together, these results provide an example of a transformer that provably implements an exact circuit algorithm while remaining interpretable, efficient, and learnable.
We present our findings first by introducing the exact model construction that can evaluate Boolean formulas without any training, followed by empirical learnability experiments for all tasks in randomly initialized models.

{\bf Contributions.}
\begin{enumerate}[leftmargin=2em,nosep]
    \item A provably-correct, white-box transformer construction for non-uniform (arbitrary depth) Boolean expressions, enabling perfect evaluation via a circuit reduction procedure.
    \item Empirical results (and code) demonstrating that this architecture can learn and perfectly generalize common compositional tasks, including Boolean algebra, modular arithmetic, and ListOps.
\end{enumerate}

\section{Preliminaries}
\label{sec:problem}

We begin by introducing notation and problem preliminaries.
For clarity, we focus on Boolean expressions, which provide the simplest setting for encoding universal symbolic computation and requires only an embedding dimension of 7 to one-hot encode its entire vocabulary.
While we focus on this language for exposition, the general model construction applies to any domain that can be represented as a circuit, with only superficial changes to the model (e.g., changes to the requisite embedding dimension according to the size of the task vocabulary). 

We use a \textbf{universal Boolean language}: $\Sigma_{\mathrm{bool}} = \big(\texttt{(}, \texttt{)}, \texttt{1}, \texttt{0}, \wedge, \vee, \sim \big)$.
Operands are \texttt{1} (\texttt{TRUE}) and \texttt{0} (\texttt{FALSE}). 
Operators are unary $\sim$ (\texttt{NOT}) and binary $\wedge$ (\texttt{AND}), $\vee$ (\texttt{OR}).
See Fig. \ref{fig:problem}A for examples of expressions, and Appendix \ref{app:task} for further details.
\textbf{Token Embedding.}
Let $D = |\Sigma_{\mathrm{bool}}|$. 
Each token is encoded as a one-hot vector in $\{0,1\}^D$. 
Note that $\Sigma_{\mathrm{bool}}$ is an ordered vocabulary. 
Thus, the embedding space for each token of $\Sigma_{\mathrm{bool}}$ is mapped to the embedding space $\{0,1\}^{D}$, where the $i$-th coordinate corresponds to the $i$-th token in $\Sigma$.
An input expression of length $n$ is represented by

\begin{equation}
    \label{eq:encoding}
    X^{(0)} = (x^{(0)}_1,\dots,x^{(0)}_n) \in \{0,1\}^{n \times D}
\end{equation}
We write $X^{(t)}$ for the sequence state after $t$ model iterations.

\paragraph{Terminology.}
Let $\llbracket x \rrbracket$ denote the \textbf{\emph{semantic value}} (or output) of a Boolean expression $x$ (i.e., $\texttt{eval(x)}$), such that $\llbracket x \rrbracket \in \{\texttt{1},\texttt{0}\}$. 
Let $d(x)$ denote the \textbf{\emph{depth}} of $x$ represented as a circuit, and \textbf{\emph{active depth}} $\delta(X)$ as the maximum depth among unreduced subexpressions in $X$.
For fully parenthesized expressions, depth is defined recursively: a (\texttt{0} or \texttt{1}) has depth $0$, while a compound expression has depth one greater than the deepest of its constituent subexpressions.
For fully parenthesized expressions, a \textbf{\emph{reducible subexpression}} is an innermost parenthesized span containing only an operator and its operands.
A sequence state $X^{(t)} \in \{0,1\}^{n \times D}$ is a \textbf{\emph{valid expression}} if each innermost subexpression corresponds either to an unevaluated reducible subexpression or its semantic value, and its \textbf{\emph{global evaluation}} (final semantic value $\llbracket X^{(t)} \rrbracket$ obtained by recursively evaluating all subexpressions) is well-defined.
Two valid expressions $X$ and $Y$ are \textbf{\emph{semantically equivalent}}, written $X \equiv Y$, if they have the same global evaluation.

{\bf Model Overview.}
The model evaluates expressions by iterative local circuit reduction. At each iteration it reduces the expression's depth by 1 through the following steps:

\begin{enumerate}[leftmargin=2em,nosep]
    \item Computes a reduction mask identifying the deepest reducible subexpressions.
    \item Decomposes that mask into parallel subexpressions that can be evaluated independently.
    \item Routes token information {\em within a subexpression} with attention.
    \item Applies a local semantic map to compute the semantic value of each subexpression via MLP.
    \item Evaluates masked subexpressions while preserving non-reduced tokens with a gated residual.
\end{enumerate}

\section{Model Construction and Formalization}
\label{sec:model}

The following formal argument proceeds step by step by crafting an exact parameterization.
For each model component, we formally construct each component and summarize its key contribution in a corresponding lemma.
Throughout, we also provide figures that illustrate, with concrete examples, how these components collectively implement circuit depth reduction (Fig. \ref{fig:pexattn}-\ref{fig:thm}).
We combine these ingredients into theorem statements for depth-1 circuit reduction and arbitrary-depth correctness.
By construction, the model does not require training and achieves perfect performance, correctly evaluating Boolean expressions of arbitrary depth.
(In the subsequent section, we explore learnability under random initialization given the base architecture.)
For our formal claims, we make the following assumptions:
1) Inputs are fully parenthesized (balanced parentheses) valid expressions.
2) Tokens within a valid subexpression are order-invariant (commutative up to parentheses), e.g., (\texttt{1} $\vee$ \texttt{0}) $\equiv$ (\texttt{0} $\vee$ \texttt{1}).
We visualize all the chosen model parameters in Appendix Fig. \ref{afig:fullmodel}, and provide the code for this solution in the supplementary zip file.



\subsection{Positional encoding: Identifying a reduction mask and parallel subexpressions}

The key insight that enables a single-layer transformer to implement depth-1 circuit reduction is that representing a circuit from a string fundamentally requires encoding the circuit's depth (i.e., by counting parentheses).
In what follows, we describe a novel PE mechanism that, given a valid expression $X^{(t)}$, is able to identify the deepest reducible subexpression(s).

\paragraph{Reduction mask.} We construct a PE mechanism with a gating matrix parameter $G_{\text{PE}} \in \{-1,0,1\}^{D \times D}$ that produces a \emph{reduction mask} $P^{(t)} \in \{0,1\}^n$ identifying the tokens belonging to the deepest unreduced subexpressions, i.e., those at depth $\delta(X^{(t)})$.
Formally, $P^{(t)}_i = 1$ if and only if the $i$-th token of $X^{(t)}$ belongs to a maximally nested (deepest) parenthesized subexpression (i.e., it is reducible).
Identifying such subexpressions reduces to computing parenthesis nesting depth.
Let "\texttt{(}" and "\texttt{)}" correspond to the first and second embedding dimensions, respectively.
We define
\begin{equation}
\label{eq:gpe}
G_{\mathrm{PE}} = \mathrm{diag}(1,-1,0,0,\dots,0),
\end{equation}
so that when an expression $X^{(t)}$ is gated (multiplied) by $G_\text{PE}$, open parentheses contribute $+1$, closed parentheses contribute $-1$, and all other tokens are ignored.

To count the number of open/closed parentheses, we compute the cumulative sum of parentheses in $X^{(t)}$, both forward and backwards along the sequence.
Let $L \in \{0,1\}^{n \times n}$ be a lower-triangular matrix of ones. 
The model computes the cumulative sum of parenthesis counts by gating $X^{(t)}$ with $G_\text{PE}$, and then computes the cumulative sum forwards and backwards with $L$ to maintain symmetry:
\begin{equation}
\label{eq:pe_a1a2}
A_1^{(t)} = (L X^{(t)} G_{\mathrm{PE}})\mathbf{1}, \qquad
A_2^{(t)} = -(L^\top X^{(t)} G_{\mathrm{PE}})\mathbf{1},
\end{equation}
where $A_1^{(t)}$ counts the net number of open parentheses from position $1$ to $n$, and $A_2^{(t)}$ counts the net number of close parentheses from position $n$ to $1$. 
These are then stacked into
\begin{equation}
\label{eq:pe_astack}
A^{(t)} = \mathrm{stack}(A_1^{(t)},A_2^{(t)}) \in \mathbb{Z}^{n \times 2},
\end{equation}
and a hardmax operation is applied to identify sequence positions with maximal depth:
\begin{equation}
\label{eq:pe_p}
P^{(t)} = \mathbbm{I}[\max_{j \in \{1,2\}} A^{(t)}_{i,j} = \max_{i',j'} A^{(t)}_{i',j'}] \in \{0,1\}^n,
\end{equation}
where $\mathbbm{I}[\cdot]$ is the indicator function.
This selects positions $i$ where the maximum depth value across both forward and backward counts equals the global maximum depth, identifying the deepest nested positions (e.g., see Fig. \ref{fig:pexattn}A for intuition).
Thus, the resulting mask $P^{(t)}$ selects exactly the tokens belonging to the deepest reducible subexpressions.
(Note that in learnable experiments reported in Section \ref{sec:empirical}, $G_{\mathrm{PE}}$ is set to be learnable.)

\begin{lemma}[Reduction mask]
\label{lem:reduction-mask}
Define parameter $G_\text{PE}$ as in Eq. \ref{eq:gpe}.
Let $X^{(t)}$ be a valid expression.
The PE mechanism equipped with $G_{\text{PE}}$ and described in Eqs. \ref{eq:pe_a1a2}-\ref{eq:pe_p} produces the binary reduction mask $P^{(t)}$, whose support consists exactly of the tokens in the deepest parenthesized spans of $X^{(t)}$. 
\end{lemma}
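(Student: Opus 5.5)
The plan is to show that, for every position $i$, the quantity $\max_{j}A^{(t)}_{i,j}$ equals the \emph{parenthesis nesting depth} of token $i$. Here a parenthesis counts as belonging to the span it delimits. Once that holds, the hardmax in Eq.~\ref{eq:pe_p} selects exactly the tokens of maximal nesting depth, and it remains to check that these form the deepest parenthesized spans. Throughout, write $g_k \in \{+1,-1,0\}$ for the $k$-th entry of $X^{(t)}G_{\mathrm{PE}}\mathbf{1}$. Since the rows of $X^{(t)}$ are one-hot, $g_k=+1$ for \texttt{(}, $g_k=-1$ for \texttt{)}, and $g_k=0$ otherwise. Then $A^{(t)}_{1,i}=\sum_{k\le i} g_k$ and $A^{(t)}_{2,i}=-\sum_{k\ge i} g_k$.

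\textbf{Step 1: per-token values.} Let $h_i$ denote the number of open parentheses strictly before $i$ that are unmatched at $i$; balancedness of a valid expression makes this well defined. Case analysis on the token at $i$:
\begin{itemize}[leftmargin=2em,nosep]
\item If $i$ is an operand or operator, then $A_{1,i}=h_i$. Because the full string is balanced, $\sum_k g_k=0$, so $A_{2,i}=\sum_{k<i} g_k = h_i$.
\item If $i$ is \texttt{(}, then $A_{1,i}=h_i+1$ and $A_{2,i}=h_i$.
\item If $i$ is \texttt{)} closing a span that opened at nesting level $m$, then $A_{1,i}=m-1$ and $A_{2,i}=m$.
\end{itemize}
In every case $\max(A_{1,i},A_{2,i})$ equals the nesting depth $\mathrm{nd}(i)$ of the innermost span containing $i$, with both of that span's delimiters included. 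This symmetry is exactly why the forward and backward sums are stacked: either sum alone undercounts one of the two delimiters.

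\textbf{Step 2: identify the maximizers.} Let $M=\max_i \mathrm{nd}(i)$, which equals $\max_{i',j'}A_{i',j'}$ by Step 1. Then $P^{(t)}_i=1$ iff $\mathrm{nd}(i)=M$. Suppose a span $S$ has nesting depth $M$. It cannot contain another parenthesis pair, because that pair would have depth $M+1$. Hence $S$ contains only an operator and operands (values), i.e.\ it is an innermost reducible subexpression. Every token of $S$, delimiters included, has $\mathrm{nd}=M$ by Step 1. Conversely, any token with $\mathrm{nd}=M$ lies in such a span. So the support of $P^{(t)}$ is exactly the union of the deepest parenthesized spans.

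\textbf{Main obstacle.} The delicate part is making ``deepest'' precise in valid intermediate states. After some reductions, values sit where spans used to be. One must argue that the maximal nesting depth $M$ corresponds to the active depth $\delta(X^{(t)})$, i.e.\ that the spans of maximal parenthesis nesting are precisely the unreduced subexpressions whose operands are all values. I would handle this by showing that reduction replaces a whole span, parentheses included, by a single value token. Reduction therefore removes one nesting level along each deepest path while leaving balancedness intact, so Step 1 continues to apply.

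The degenerate fully reduced case needs a separate remark. There no parentheses remain, all entries are $0$, and $P^{(t)}$ is all ones. This is consistent with the statement only vacuously, and is handled by the halting criterion.
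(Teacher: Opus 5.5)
Your Steps 1--2 are correct and follow the paper's own route. The paper only asserts the result (``follows directly from the construction'' plus Fig.~\ref{fig:pexattn}A), and your case analysis supplies the missing detail. Showing $\max(A_{1,i},A_{2,i})=\mathrm{nd}(i)$, with the forward sum counting \texttt{(} and the backward sum counting \texttt{)}, is exactly what lies behind the paper's remark that both cumulative sums are stacked ``to maintain symmetry''; the hardmax then selects the maximally nested spans. One small nit: rows of intermediate states can be $\mathbf{0}$ rather than one-hot, but they also give $g_k=0$, so Step 1 is unaffected.

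The ``main obstacle'' paragraph, however, asserts something false, and you should drop it rather than try to prove it. The spans of maximal nesting are \emph{not} precisely the unreduced subexpressions whose operands are all values. In \texttt{((1}$\wedge$\texttt{0)}$\vee$\texttt{((1}$\vee$\texttt{0)}$\wedge$\texttt{1))}, the span \texttt{(1}$\wedge$\texttt{0)} is reducible but has nesting depth $2<M=3$, so $P^{(t)}$ excludes it. Only the inclusion you proved in Step~2 holds (maximally nested $\Rightarrow$ reducible), and that is all the lemma claims. The correct link to active depth is that $\delta(X^{(t)})=M$, because the height of an expression tree equals its maximal nesting depth. Removing every span at nesting depth $M$ therefore lowers it by exactly one; no statement about all reducible spans is needed, or true. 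The paper's gloss ``(i.e., it is reducible)'' and the phrase ``every reducible subexpression'' in Theorem~\ref{thm:one-step-reduction} invite the same misreading.

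Finally, the $M=0$ case is not vacuous; it is a genuine exception. With no parentheses the statement requires empty support, yet $P^{(t)}=\mathbf{1}$. The lemma therefore needs the hypothesis $\delta(X^{(t)})>0$, and you should state it explicitly rather than defer to the halting criterion.
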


\begin{proof}
    The claim follows directly from the construction in Eqs. \ref{eq:gpe}-\ref{eq:pe_p}.
    A visual illustration with an example can be found in Fig. \ref{fig:pexattn}A.
\end{proof}

\begin{figure}[h]
  \centering
  \includegraphics[scale=1.1]{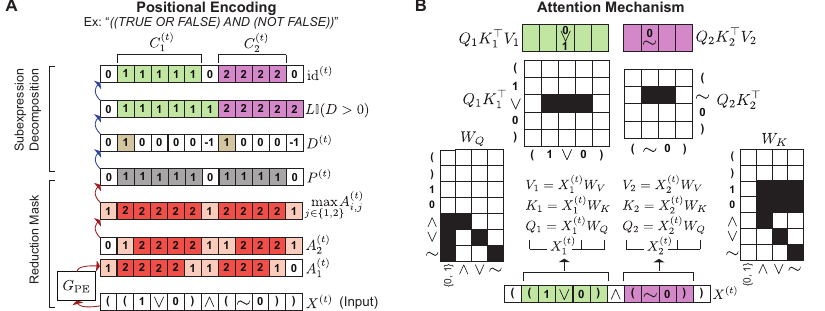}
  \caption{
  A visual demonstration of the PE and attention mechanism constructions (Lemmas \ref{lem:reduction-mask}-\ref{lem:attn}. 
  A) PE gating identifies reducible subexpressions by first computing the reduction mask $P^{(t)}$, and then decomposing the independent reducible subexpressions $C_1^{(t)}$ and $C_2^{(t)}$.
  Beginning with the specific example input ``\texttt{((1}$\vee$\texttt{0)}$\wedge$\texttt{(}$\sim$\texttt{0))}'' ($X^{(t)}$), we characterize and label each of the PE operations and variable assignments.
  B) Type-constrained attention mechanism is implemented on each subexpression cluster (green and pink masks) generated from the PE. 
  $W_Q$ and $W_K$ represent the actual $\{0,1\}^{D \times 4}$ matrix construction our solution (Lemma \ref{lem:attn}) uses.
  By implementing attention on each reducible subexpression independently, we constrain attention routing to independent subexpressions.
}
  \label{fig:pexattn}
\end{figure}

\paragraph{Parallel subexpression decomposition.}
Given the binary reduction mask $P^{(t)}$ from Lemma \ref{lem:reduction-mask} generated from parameter $G_{\text{PE}}$, multiple disjoint subexpressions may be reducible in parallel for expressions $x$ with $d(x)>1$.
Thus, the model must decompose these subexpressions into separate clusters that can be independently evaluated.

We now construct a PE submechanism that partitions $P^{(t)}$ into disjoint contiguous sets $\mathcal{C}^{(t)}=\{C_1^{(t)},\dots,C_{k_t}^{(t)}\}$, such that each cluster $C_c^{(t)}$ corresponds to exactly one locally reducible subexpression, and distinct clusters can be evaluated independently in parallel (Fig. \ref{fig:pexattn}A).
Let $\mathbf{0}_1 \in \{0,1\}^1$ denote a leading zero. 
Define the padded mask $\widetilde{P}^{(t)} = [\mathbf{0}_1; P^{(t)}] \in \{0,1\}^{n+1}$.
The discrete difference operator identifies transitions from unmasked to masked positions:
\begin{equation}
\label{eq:discretediff}
D^{(t)}_i = \widetilde{P}^{(t)}_i - \widetilde{P}^{(t)}_{i-1}, \quad i=2,\dots,n+1.
\end{equation}
Each positive transition ($D^{(t)}_i > 0$) marks the start of a new subexpression. Cumulative summation of cluster starts assigns a unique integer ID to each contiguous masked segment, then unmasked positions are zeroed by multiplying with the original reduction mask $P^{(t)}$:
\begin{equation}
    \label{eq:id_i}
    \mathrm{id}^{(t)} = P^{(t)} \odot L \mathbb{I}(D^{(t)} > 0)
\end{equation}
This assigns each position a cluster ID: $\mathrm{id}^{(t)}_i = 0$ for unmasked positions, and $\mathrm{id}^{(t)}_i = c$ (where $c \in \{1,\dots,k_t\}$) for positions in the $c$-th subexpression cluster. 
The total number of clusters $k_t$ equals the number of $0 \to 1$ transitions in $P^{(t)}$.
Each cluster corresponds to a contiguous interval of positions. For cluster ID $c \in \{1,\dots,k_t\}$, define the position set
\begin{equation}
\label{eq:pe_clust}
C_c^{(t)} = \{i \in \{1,\dots,n\} : \mathrm{id}^{(t)}_i = c\},
\end{equation}
which forms a contiguous interval $C_c^{(t)} = \{i_c, i_c+1, \dots, j_c\}$ for $1 \le i_c \le j_c \le n$. The collection of all clusters is thus
\begin{equation}
\label{eq:pe_clustcollection}
\mathcal{C}^{(t)} = \{C_1^{(t)}, C_2^{(t)}, \dots, C_{k_t}^{(t)}\}.
\end{equation}

Because the support of $P^{(t)}$ consists of disjoint contiguous reducible spans, each cluster $C_c^{(t)}$ coincides with one span (subexpression) such that each subexpression can be evaluated independently.

\begin{lemma}[Parallel subexpression decomposition]
\label{lem:cluster-correctness}
Let $P^{(t)}$ be the binary reduction mask specified in Lemma \ref{lem:reduction-mask}, whose support is the disjoint union of reducible subexpressions, such as the example provided in Fig. \ref{fig:pexattn}A). Then the clustering vector $\mathrm{id}^{(t)}$ in Eq. \ref{eq:id_i} partitions the support of $P^{(t)}$ into disjoint contiguous clusters $\mathcal{C}^{(t)}=\{C_1^{(t)},\dots,C_{k_t}^{(t)}\}$ (Eq. \ref{eq:pe_clustcollection})
such that each cluster $C_c^{(t)}$ corresponds to exactly one locally reducible subexpression, and distinct clusters can be evaluated in parallel.
\end{lemma}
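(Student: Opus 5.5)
The plan is to take the structural input from Lemma \ref{lem:reduction-mask} as given: the support of $P^{(t)}$ is a union of deepest parenthesized spans. First I will show these spans are pairwise \emph{non-adjacent}, i.e.\ separated by at least one unmasked position. Then I will show that the cumulative-sum construction of Eq.~\ref{eq:id_i} assigns exactly one distinct positive integer to each maximal run of ones in $P^{(t)}$, and zero elsewhere. Together these identify clusters with subexpressions.

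\textbf{Step 1 (separation).} Take two distinct deepest reducible spans $S=\{a,\dots,b\}$ and $S'=\{a',\dots,b'\}$ with $b<a'$, where $X^{(t)}_b=\texttt{)}$ and $X^{(t)}_{a'}=\texttt{(}$. Suppose $a'=b+1$. In a fully parenthesized well-formed expression, two sibling operands of a binary operator are separated by the operator token. Two non-sibling operands are separated by additional parentheses or tokens. So a closing parenthesis is never immediately followed by an opening one, and in either case some position $m$ with $b<m<a'$ exists.

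That position lies outside every deepest span. Its nesting depth is strictly less than the maximal depth, since it is not inside an innermost span at maximal depth. Hence $P^{(t)}_m=0$. This step is the main obstacle, because it needs a precise grammar-level fact about valid expressions (Appendix~\ref{app:task}) and about the interaction with the forward/backward counts $A_1,A_2$. I would handle it by a short induction on the grammar: every compound expression has the form $(\,e_1\ \mathrm{op}\ e_2\,)$ or $(\sim e)$. Consecutive parenthesized constituents are therefore always separated by an operator token, which is neither parenthesis and which lies at depth one less than its operands.

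\textbf{Step 2 (labelling).} With the leading zero padding, $\mathbb{I}(D^{(t)}_i>0)$ equals $1$ exactly at positions $i$ with $P^{(t)}_i=1$ and $P^{(t)}_{i-1}=0$ (taking $P^{(t)}_0:=0$), i.e.\ at the left endpoint of each maximal run of ones. Multiplying by $L$ gives at position $i$ the number of run starts at or before $i$. This count is constant within a run, increases by exactly one at each new run, and runs are visited in order. Masking by $P^{(t)}$ zeroes the positions between runs. Hence each run receives a distinct label $c\in\{1,\dots,k_t\}$, where $k_t$ is the number of $0\to1$ transitions, and each $C_c^{(t)}$ is a contiguous interval.

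\textbf{Step 3 (conclusion).} By Lemma \ref{lem:reduction-mask} and Step 1, the maximal runs of ones in $P^{(t)}$ are exactly the deepest reducible spans, since a single span is contiguous and distinct spans are not merged. By Step 2, each $C_c^{(t)}$ is therefore exactly one reducible subexpression $(\,\cdot\,)$ containing only an operator and its operands. Parallel evaluability follows from two facts. The clusters are disjoint. Each cluster is innermost, so its semantic value depends only on its own tokens, and replacing one cluster by its value does not alter the tokens of another. Thus the evaluations commute and can be done simultaneously.
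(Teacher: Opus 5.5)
Your proposal is correct and follows the same route as the paper, whose proof only states that the claim follows from the construction in Eqs.~\ref{eq:discretediff}--\ref{eq:pe_clustcollection}; your Step~2 writes that verification out. Your Step~1 goes further by proving the non-adjacency of deepest spans, which the paper only asserts in its appendix remark on ListOps (true for infix grammars, false for prefix ones unless an extra $\mathbf{0}$ token follows each closing parenthesis); that argument covers every intermediate $X^{(t)}$, since the nonzero tokens of $X^{(t)}$ again form a fully parenthesized infix expression and interleaved $\mathbf{0}$ vectors cannot create an adjacent \texttt{)(} pair.
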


\begin{proof}
    The claim follows directly from the construction in Eqs. \ref{eq:discretediff}-\ref{eq:pe_clustcollection}.
    A visual illustration with an example can be found in the upper half of Fig. \ref{fig:pexattn}A.
\end{proof}

\subsection{Attention Mechanism: Type-constrained token routing}

By Lemma \ref{lem:cluster-correctness}, we constructed disjoint contiguous clusters $\mathcal{C}^{(t)}$ of $X^{(t)}$ that can be evaluated in parallel.
Next, we apply attention separately to each subexpression $C_c^{(t)}$. 
This ensures that attention is limited to a single, reducible span (subexpression).
Moreover, reducible subexpressions encode a local, depth-1 circuit structure.
In Boolean algebra, this local structure follows a grammar that obeys strict type constraints: operands map to operators (e.g., in ``\texttt{(1$\vee$0)}'', \texttt{1} $\to \vee$ and \texttt{0} $\to \vee$; Fig. \ref{fig:problem}A). 
We leverage these constraints to construct token type-specific mappings within the attention mechanism, enabling faithful representation of Boolean grammars.

\emph{Local and type-based attention}. 
Without loss of generality, choose $C_c^{(t)} \in \mathcal{C}^{(t)}$, and let
\begin{equation}
\label{eq:x_c}
X_c^{(t)} = \mathbbm{I}[i \in C_c^{(t)}] \odot X^{(t)}
\end{equation}
 denote the token embeddings for subexpression $C_c^{(t)}$. Let queries, keys, and values be given by 
\begin{equation}
\label{eq:querykeyval_def}
Q_c^{(t)} = X_c^{(t)} W_Q, \quad K_c^{(t)} = X_c^{(t)} W_K, \quad V_c^{(t)} = X_c^{(t)} W_V,
\end{equation}
where $W_Q, W_K \in \{0,1\}^{D \times d_h}$ are binary routing matrices that permit only binarized admissible interactions between token types (operands to operators, and operator self-interactions), and 0 everywhere else; see Fig. \ref{fig:pexattn}B (or Fig. \ref{afig:fullmodel}) for the exact matrix constructions for $W_Q$ and $W_K$.
Specifically for Boolean expressions, $D=7$ and $d_h=4$, where $d_h$ encodes one dimension for all operands (\texttt{0} and \texttt{1}), and a dimension each for each operator separately ($\wedge, \vee$ and $\sim$).
$W_V=I_D$ is fixed as the identity.
Notably, the product $Q_c^{(t)} \cdot (K_c^{(t)})^\top$ produces an attention weight matrix that routes the operand and operator embeddings within $C_c^{(t)}$ to the operator's token position (Fig \ref{fig:pexattn}B).

\emph{Bag-of-tokens aggregation.}
Within $C_c^{(t)}$, $V_c^{(t)}=X_c^{(t)}$, since $W_V=I_D$.
Thus, the attention output aggregates token embeddings via
\begin{equation}
\label{eq:attn_qkv}
H_c^{(t)} = Q_c^{(t)} (K_c^{(t)})^\top X_c^{(t)}
\end{equation}
Critically, contributions are summed over tokens locally in $C_c^{(t)}$ (and therefore do not interact across subexpressions) and depend only on their routed types.
By construction of $Q_c^{(t)}$ and $K_c^{(t)}$ (Fig. \ref{fig:pexattn}B), the resulting vector is a ``bag-of-words embedding that simply sums the admissible token embeddings (operands with operators) at the operator's token position using only the admissible tokens in $C_c^{(t)}$, and leaves all other positions in $C_c^{(t)}$ as 0 vectors.
Note that the attention mechanism does not use a softmax.
Thus, the operation in Eq. \ref{eq:attn_qkv} reduces to linear attention, with $Q_c^{(t)} \big((K_c^{(t)})^\top X_c^{(t)}\big)$, which allows us to avoid the quadratic cost of standard softmax attention \citep{katharopoulos_transformers_2020}. 

\begin{lemma}[Type-constrained attention mechanism]
\label{lem:attn}
Using $W_Q$ and $W_K$ as defined in Eq. \ref{eq:querykeyval_def} and exactly constructed in Fig. \ref{fig:pexattn}B), the attention output at the operator position computes a permutation-invariant summed aggregation of tokens in $C_c^{(t)}$, yielding a bag-of-tokens (counts) vector of the admissible token types in the reducible subexpression at the token embedding location of the operator. 
\end{lemma}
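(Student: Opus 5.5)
The plan is to compute $H_c^{(t)} = Q_c^{(t)}(K_c^{(t)})^\top X_c^{(t)}$ entrywise for one cluster, using the explicit binary routing matrices. First I fix the construction of $W_Q,W_K \in \{0,1\}^{7\times 4}$ that Fig.~\ref{fig:pexattn}B describes. Index the head dimensions as $h_{\mathrm{op}}$ (operands), $h_\wedge$, $h_\vee$, $h_\sim$.

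For $W_Q$, each operator token row ($\wedge,\vee,\sim$) has a single $1$ in its own column $h_\wedge,h_\vee,h_\sim$. The rows of the parentheses and of the operands \texttt{0}, \texttt{1} are zero. For $W_K$, each operand row has a $1$ in every operator column $h_\wedge,h_\vee,h_\sim$, and each operator row has a $1$ in its own column. The parenthesis rows are zero. Under this choice, $(W_QW_K^\top)_{a,b}=1$ exactly when $a$ is an operator and $b$ is either an operand or the same operator $a$; otherwise it is $0$. If the paper's figure uses the operand column $h_{\mathrm{op}}$ somewhat differently, only this kernel matters, and the argument goes through for any such matrix.

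Next, because $X_c^{(t)}$ has one-hot rows supported on $C_c^{(t)}$, the attention score is $(Q_c^{(t)} (K_c^{(t)})^\top)_{ij} = x_i^\top W_QW_K^\top x_j$. For $i,j\in C_c^{(t)}$ this equals $\mathbbm{I}[x_i \text{ operator},\, x_j \text{ admissible for } x_i]$, and it is $0$ whenever either index lies outside the cluster. Row $i$ of $H_c^{(t)}$ is therefore $\sum_{j\in C_c^{(t)}} (x_i^\top W_QW_K^\top x_j)\, x_j$.

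Then I split into cases.
\begin{enumerate}
\item If $i\notin C_c^{(t)}$, or $x_i$ is a parenthesis or operand, the query row is zero, so row $i$ of $H_c^{(t)}$ is $0$.
\item If $x_i$ is the operator $o$, row $i$ equals $\sum_{j\in C_c^{(t)}} \mathbbm{I}[x_j\in\{\texttt{0},\texttt{1},o\}]\,x_j$. Since the rows are one-hot, this is the count vector whose coordinates record the number of \texttt{0}s, the number of \texttt{1}s, and one copy of $o$ in the span.
\end{enumerate}

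Permutation invariance follows because the result is a sum over a set of positions with weights that depend only on token types, not on positions. Any reordering of tokens within $C_c^{(t)}$ therefore leaves the vector unchanged. This is consistent with the paper's commutativity assumption.

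The main obstacle is guaranteeing that the output lands on exactly one position, so that the bag is well defined. This requires that each cluster contain exactly one operator token. I would invoke Lemma~\ref{lem:cluster-correctness}, which says each $C_c^{(t)}$ is exactly one reducible subexpression. By the definition of a reducible subexpression in a fully parenthesized valid expression, such a span has the form $(\,a\ o\ b\,)$ or $(\sim a)$, with operands $a,b\in\{\texttt{0},\texttt{1}\}$. This gives a single operator and no operator self-interaction with other operators. A further point is that operators of a different type must not be counted, which the diagonal self-only structure of the operator rows of $W_K$ ensures. Finally, I would remark that without a softmax the product can be reassociated as $Q_c^{(t)}\big((K_c^{(t)})^\top X_c^{(t)}\big)$, so the computation is linear in $|C_c^{(t)}|$.
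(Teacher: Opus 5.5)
Your proposal is correct and is essentially the paper's own argument: the paper only asserts that the lemma follows from the construction in Eqs.~\ref{eq:x_c}--\ref{eq:attn_qkv} and Fig.~\ref{fig:pexattn}B. Your entrywise computation of $x_i^\top W_QW_K^\top x_j$, the case split on the query token, and the appeal to Lemma~\ref{lem:cluster-correctness} for a single operator per cluster make that explicit, and your remark that only the kernel $W_QW_K^\top$ matters covers the paper's variant, which presumably routes operands through the dedicated operand coordinate of $d_h$. One small imprecision: for $t\ge 1$ some rows of $X_c^{(t)}$ are zero vectors left by earlier reductions rather than one-hot rows, but these contribute nothing to the sum, so your count-vector conclusion is unchanged.
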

\begin{proof}
    The claim follows directly from the construction in Eqs. \ref{eq:x_c}-\ref{eq:attn_qkv}.
    A visual illustration with an example can be found in Fig. \ref{fig:pexattn}B.
\end{proof}

\subsection{Feedforward Layer: Local semantic evaluation}

The output of the preceding Lemma \ref{lem:attn} is a bag-of-words vector of a locally reducible subexpression.
For reducible Boolean expressions, the semantic value can be fully determined by the operator type together with the pair of operand values, independent of order.
This mapping can be realized as a finite look-up table of depth-1 Boolean tree.
Since the set of these valid reducible subexpressions is finite, $f$ is a function on a finite domain. 
Any function on a finite domain is realizable by a single-hidden-layer MLP \citep{hornik_multilayer_1989}.
In practice, in our constructed model solution, we train a single-hidden-layer MLP to simulate a finite truth table for depth-1 Boolean expressions and append this module to our transformer, although a fixed dictionary look-up can also be straightforwardly implemented (see Code and Appendix Fig. \ref{afig:fullmodel}A).

\begin{restatable}[Feedforward lookup table]{lemma}{mlplemma}
\label{lem:mlp}
There exists a function $f: \mathbb{Z}^V \to \{0,1\}^D$ such that, when applied to the bag-of-word token representation of any valid reducible subexpression $C_c^{(t)}$, it outputs the correct one-hot embedding of the resulting subexpression, and a 0 vector otherwise.
\end{restatable}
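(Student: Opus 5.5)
The plan is to build $f$ explicitly as a lookup table on a finite set, and to prove the claim by a finiteness and injectivity argument rather than by appeal to an MLP.

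\textbf{Step 1: the domain.} By Lemma~\ref{lem:attn}, the attention output at the operator position of a reducible subexpression $C_c^{(t)}$ is the count vector $b \in \mathbb{Z}_{\ge 0}^V$ of the admissible token types in that span. This vector records the operator together with the multiset of its operands. In the Boolean language a reducible subexpression has one of two forms:
\begin{itemize}
\item $(\sim a)$ with $a \in \{\texttt{0},\texttt{1}\}$;
\item $(a \circ b)$ with $\circ \in \{\wedge,\vee\}$ and $a,b \in \{\texttt{0},\texttt{1}\}$.
\end{itemize}
Up to the commutativity assumption, this gives exactly $2 + 2\cdot 3 = 8$ valid reducible patterns. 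Let $\mathcal{B} \subset \mathbb{Z}^V$ be the set of their count vectors.

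\textbf{Step 2: $\mathcal{B}$ determines the semantics.} I will check that the map from valid reducible subexpressions to $\mathcal{B}$ is injective modulo operand order. The operator coordinate identifies $\circ$. For binary operators, the operand counts $(\#\texttt{1},\#\texttt{0}) \in \{(2,0),(1,1),(0,2)\}$ fix the multiset of operands. For $\sim$, the single operand count fixes the argument. Since $\wedge$ and $\vee$ are commutative, the semantic value $\llbracket\cdot\rrbracket$ depends only on this multiset, so it is a well-defined function $g:\mathcal{B}\to\{\texttt{0},\texttt{1}\}$.

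One subtlety is what the routed representation actually contains. If $W_Q,W_K$ merge both operand values into a single head dimension, the values must still be recoverable from $b$, because $W_V=I_D$ carries the full one-hot embeddings. I will note this explicitly. This is the step where care is needed.

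\textbf{Step 3: define $f$.} Set $f(b)=e_{g(b)}$ for $b\in\mathcal{B}$, where $e_{\texttt{1}}$ and $e_{\texttt{0}}$ are the one-hot embeddings of \texttt{1} and \texttt{0} in $\{0,1\}^D$. Set $f(b)=\mathbf{0}$ for every $b\notin\mathcal{B}$.

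This second clause covers the non-operator positions, whose attention output is the zero vector by Lemma~\ref{lem:attn}. It also covers any ill-formed count vector. I must confirm that $\mathbf{0}\notin\mathcal{B}$, which holds because every element of $\mathcal{B}$ has a nonzero operator coordinate. This is what makes "a 0 vector otherwise" consistent.

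\textbf{Step 4: realizability.} The statement only asserts existence, so Step 3 already proves it. To match the architecture, I will also remark that $f$ can be realized by a single-hidden-layer network. The natural construction uses one hidden unit per element of $\mathcal{B}$: the unit computes $\mathrm{ReLU}(1-\|b-\beta\|_1)$, or a similar hat function built from a few ReLUs, which equals $1$ exactly at $b=\beta$ and $0$ at every other integer point. The output layer then sums $e_{g(\beta)}$ times each unit's activation. This is exact on the integer lattice, where $b$ always lies.

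Alternatively, one can cite universal approximation on a finite domain \citep{hornik_multilayer_1989} combined with rounding. I expect only minor bookkeeping in checking that the hat units vanish at all other lattice points.
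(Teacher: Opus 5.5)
Your proposal is correct and follows essentially the paper's route: you enumerate the 8 reducible patterns, define $f$ as a lookup table on their bag-of-tokens vectors, and send everything else (including the $\mathbf{0}$ vectors at non-operator positions) to $\mathbf{0}$, with your well-definedness check and explicit construction replacing the paper's bare appeal to \citet{hornik_multilayer_1989}. One small correction to the optional Step 4: $\mathrm{ReLU}(1-\|b-\beta\|_1)$ is not a single hidden unit, because the $\ell_1$ norm must itself be built from ReLUs via $|x|=\mathrm{ReLU}(x)+\mathrm{ReLU}(-x)$, so as written it is a two-hidden-layer network---this is harmless, since the lemma only asserts that $f$ exists.
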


\begin{proof}
   The proof follows by a standard finite-domain argument and is deferred to Appendix~\ref{app:proof_mlp}.
\end{proof}


\subsection{Gated residual update}

After the feedforward network (Lemma \ref{lem:mlp}), reducible subexpressions produce a single token embedding that correctly evaluates that subexpression at the token position of that subexpression's operator, and 0 vectors everywhere else (see Fig. \ref{fig:thm}).
However, to evaluate the full circuit beyond depth-1 expressions, we require a mechanism to preserve unreduced tokens/expressions from $X^{(t)}$ to $X^{(t+1)}$.
Concretely, once a subexpression is reduced, its interior tokens (the operands and the consumed parentheses) become the $\mathbf{0}$ vector, and only the operator position carries the computed value forward. The gated residual then leaves these zeroed positions untouched at subsequent iterations.

\begin{restatable}[Gated residual connection]{lemma}{gatedlemma}
\label{lem:residual-preservation}
Let $P^{(t)} \in \{0,1\}^n$ be the reduction mask specified in Lemma \ref{lem:reduction-mask} and $M^{(t)} = \mathbf{1} - P^{(t)}$ its complement. Define the gated residual update
\begin{equation}
    \label{eq:gated_resid}
    X^{(t+1)} = M^{(t)} \odot X^{(t)} + P^{(t)} \odot Y^{(t)}.
\end{equation}
Then every position outside the reduction mask is preserved, and only positions inside the reduction mask are updated.
\end{restatable}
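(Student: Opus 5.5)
The argument is a direct, position-wise computation from Eq.~\ref{eq:gated_resid}. The one thing to fix first is how $\odot$ acts when a vector in $\{0,1\}^n$ multiplies a matrix in $\mathbb{R}^{n\times D}$. I read it as broadcasting along the embedding dimension, so $(P^{(t)}\odot Y^{(t)})_i = P^{(t)}_i\, y^{(t)}_i$, and similarly for $M^{(t)}$. Row $i$ of the update is then
\[
x^{(t+1)}_i = (1-P^{(t)}_i)\, x^{(t)}_i + P^{(t)}_i\, y^{(t)}_i .
\]

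By Lemma~\ref{lem:reduction-mask}, $P^{(t)}$ is binary, so I would split into two exhaustive cases for each position $i$.

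\emph{Case $P^{(t)}_i=0$}, i.e.\ $i$ lies outside the deepest reducible spans. Then $M^{(t)}_i=1$ and $x^{(t+1)}_i=x^{(t)}_i$ exactly, whatever $y^{(t)}_i$ is. In particular, any nonzero garbage the attention or MLP stages produce outside the mask cannot leak into the state.

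\emph{Case $P^{(t)}_i=1$.} Then $M^{(t)}_i=0$ and $x^{(t+1)}_i=y^{(t)}_i$. This establishes the claim that only masked positions can change. I would add a remark on what they change to, using Lemmas~\ref{lem:cluster-correctness}, \ref{lem:attn} and \ref{lem:mlp}: within each cluster $C_c^{(t)}$, $y^{(t)}_i$ is the one-hot semantic value at the operator position and $\mathbf{0}$ elsewhere. The masked span therefore collapses to a single value token surrounded by zero vectors.

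A corollary worth recording is that previously zeroed positions remain $\mathbf{0}$ at later iterations. A zero row has no parenthesis contribution under $G_{\mathrm{PE}}$, so it does not affect $A^{(t)}$. It is either outside the new mask, and hence preserved by Case~1, or inside it, where attention and MLP assign it $\mathbf{0}$ because only operator positions receive output.

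The main obstacle is not the algebra but stating precisely that $P^{(t)}\in\{0,1\}^n$, so that $M^{(t)}$ and $P^{(t)}$ are complementary idempotent gates with $M^{(t)}_iP^{(t)}_i=0$ and $M^{(t)}_i+P^{(t)}_i=1$. This follows because $P^{(t)}$ is defined as an indicator in Eq.~\ref{eq:pe_p}. Once that is in place, the lemma is immediate row by row.
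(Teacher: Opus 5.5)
Your proposal is correct and matches the paper's proof: both write the update row-wise as $X^{(t+1)}_i = M^{(t)}_i X^{(t)}_i + P^{(t)}_i Y^{(t)}_i$, then use $P^{(t)}_i \in \{0,1\}$ and $M^{(t)}_i = 1 - P^{(t)}_i$ to split into the two exhaustive cases. Your additional remarks, on the broadcasting convention for $\odot$ and on zeroed rows staying $\mathbf{0}$ in later iterations, are consistent with the construction but are not needed for the lemma.
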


\begin{proof}
    Deferred to Appendix \ref{app:proofs}
\end{proof}


\subsection{Depth-1 circuit reduction and global circuit evaluation}

The preceding lemmas construct specific model components that collectively imply that a single model iteration implements a depth-1 circuit reduction step (Fig. \ref{fig:thm}).

\begin{restatable}[Depth-1 circuit reduction]{theorem}{thmone}
\label{thm:one-step-reduction}
Let $F$ be one transformer iteration (PE, attention, MLP, and gated residual; Lemmas \ref{lem:reduction-mask}--\ref{lem:residual-preservation}). 
For any valid $X^{(t)}$ with active depth $\delta(X^{(t)})>0$, we construct $F$ such that $X^{(t+1)} = F(X^{(t)})$
simultaneously evaluates every reducible subexpression, replaces each by its correct value, and preserves all inactive positions. 
Then,
\[
X^{(t+1)} \equiv X^{(t)}
\qquad \text{and} \qquad
\delta(X^{(t+1)}) = \delta(X^{(t)}) - 1.
\]
\end{restatable}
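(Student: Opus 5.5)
The plan is to compose the four lemmas in the order of the forward pass and then make two global observations: one about semantics and one about depth. Write $\delta=\delta(X^{(t)})>0$.

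\textbf{Localization.} By Lemma \ref{lem:reduction-mask}, the support of $P^{(t)}$ is exactly the set of tokens lying in parenthesized spans of maximal nesting depth. I will first argue that these spans are reducible subexpressions. A maximally nested span cannot contain another parenthesized span, since that inner span would be strictly deeper. Validity of $X^{(t)}$ then forces such a span to consist of exactly an operator and its operands, where each operand is either a literal or an already-computed value token. Zeroed positions left by earlier reductions are invisible to $G_{\mathrm{PE}}$, to the attention routing, and to the bag-of-tokens, so I will take care to note that they do not disturb this. By Lemma \ref{lem:cluster-correctness}, these spans split into clusters $C_1^{(t)},\dots,C_{k_t}^{(t)}$, one per reducible subexpression at depth $\delta$.

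\textbf{Local evaluation and update.} On each cluster, Lemma \ref{lem:attn} places the bag-of-tokens vector of $C_c^{(t)}$ at the operator position and zero vectors elsewhere in the cluster. Since the clusters are disjoint and $X_c^{(t)}$ is masked, there is no cross-talk between clusters. Lemma \ref{lem:mlp} then maps this vector to the one-hot encoding of the subexpression's value $\llbracket\cdot\rrbracket$ at the operator position, and to $\mathbf{0}$ at the other positions; this uses $f(\mathbf{0})=\mathbf{0}$, which I would state explicitly. Finally, Lemma \ref{lem:residual-preservation} writes $Y^{(t)}$ into the support of $P^{(t)}$ and leaves all other positions unchanged. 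The net effect is that each deepest subexpression is replaced by a single value token, with its parentheses and operands zeroed, and nothing else changes.

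\textbf{Equivalence.} Global evaluation is defined recursively over innermost subexpressions. Replacing a subexpression by a token carrying its own semantic value leaves every enclosing evaluation unchanged. Arguing by induction on the expression tree, or equivalently by substitutivity of $\llbracket\cdot\rrbracket$, gives $X^{(t+1)}\equiv X^{(t)}$. The same argument shows that $X^{(t+1)}$ is again valid: zero vectors act as no-ops, and the parentheses remain balanced because each reduced span removed one matched pair.

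\textbf{Depth drop.} This is the step I expect to be the main obstacle, because it needs care about what ``depth'' means after zeroing. The claim is that every unreduced subexpression of depth $\delta$ was reduced, because all of them lie at maximal nesting and hence all appear in $P^{(t)}$. Consequently, every enclosing subexpression now has children of depth at most $\delta-1$, so $\delta(X^{(t+1)})\le\delta-1$. For the reverse inequality, consider any depth-$\delta$ subexpression: its parent had depth $\delta+1$ and now has depth exactly $\delta$. When $\delta=1$, the whole expression collapses to a value and the active depth becomes $0$.

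The subtle point to verify is that \emph{nesting depth} as computed by the parenthesis counts $A_1,A_2$ coincides with \emph{circuit depth} for all maximal spans. In principle, a shallow subexpression could sit deeply nested in parentheses without being of maximal circuit depth. I plan to handle this by noting the following. A span selected by $P^{(t)}$ is innermost, hence has circuit depth $1$ relative to the current state. Every subexpression of active depth $\delta$ contains a chain of $\delta$ nested parenthesis pairs. I would then make explicit that ``active depth'' is tracked via parenthesis nesting, which is the definition Lemma \ref{lem:reduction-mask} implicitly uses. Under that reading, removing one pair from every deepest position decreases the maximum nesting by exactly one, which gives $\delta(X^{(t+1)})=\delta(X^{(t)})-1$.
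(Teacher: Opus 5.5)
Your proposal is correct and follows the same route as the paper's proof: you compose Lemmas \ref{lem:reduction-mask}--\ref{lem:residual-preservation} in forward-pass order, get $X^{(t+1)}\equiv X^{(t)}$ from correct local replacement plus preservation of unmasked positions, and get the depth drop from the fact that exactly the maximally nested spans collapse and lose their parentheses; your extra care (substitutivity, validity of $X^{(t+1)}$, $f(\mathbf{0})=\mathbf{0}$, and identifying $\delta$ with maximal parenthesis nesting, which is legitimate because the root's depth equals the maximal nesting for a fully parenthesized expression) fills in what the paper leaves implicit. One small fix is needed: your ``reverse inequality'' sentence has the indices garbled, since the parent of a deepest span sits at nesting level $\delta-1$ and has depth $2$, not $\delta+1$, and it is this untouched parent (for $\delta\ge 2$) that witnesses $\delta(X^{(t+1)})\ge\delta-1$, which your closing nesting argument already captures.
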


\begin{proof}
    The proof follows by combining Lemmas \ref{lem:reduction-mask}-\ref{lem:residual-preservation}, and is deferred to Appendix \ref{app:proofs}.
\end{proof}

\begin{restatable}[Global circuit evaluation]{theorem}{thmtwo}
\label{thm:global-correctness}
For any valid Boolean expression $x$ of depth $d(x)$, with $X^{(t+1)} = F(X^{(t)})$ and $X^{(0)}$ as its embedding,
\[
X^{(d(x))} \equiv \llbracket x \rrbracket \quad \text{and} \quad \delta(X^{(d(x)}) = 0
\]
\end{restatable}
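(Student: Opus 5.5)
The plan is to argue by induction on the iteration index $t$, using Theorem \ref{thm:one-step-reduction} as the inductive step. The invariant, for each $0 \le t \le d(x)$, is:
\[
X^{(t)} \text{ is a valid expression}, \qquad X^{(t)} \equiv X^{(0)}, \qquad \delta(X^{(t)}) = d(x) - t.
\]
\textbf{Base case.} $X^{(0)}$ is the one-hot embedding of $x$, which is a valid expression by the standing assumption of full parenthesization. Its global evaluation is $\llbracket x \rrbracket$ by definition. Since no subexpression has yet been reduced, the active depth equals the circuit depth, so $\delta(X^{(0)}) = d(x)$. I would check this last point explicitly from the recursive definition: the maximum depth over unreduced subexpressions is the depth of the outermost expression.

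\textbf{Inductive step.} Assume the invariant holds for some $t < d(x)$. Then $\delta(X^{(t)}) = d(x) - t > 0$, so the hypothesis of Theorem \ref{thm:one-step-reduction} is met. That theorem gives $X^{(t+1)} \equiv X^{(t)}$ and $\delta(X^{(t+1)}) = \delta(X^{(t)}) - 1 = d(x) - (t+1)$. Transitivity of $\equiv$, which holds because it is equality of global evaluations, yields $X^{(t+1)} \equiv X^{(0)}$.

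The step I expect to be the main obstacle is showing that $X^{(t+1)}$ is again a \emph{valid} expression. The inductive hypothesis needs this, but Theorem \ref{thm:one-step-reduction} states it only implicitly. I would obtain it from Lemmas \ref{lem:cluster-correctness}, \ref{lem:mlp} and \ref{lem:residual-preservation}. Each deepest reducible span is replaced by a single one-hot value token at its operator position, with zero vectors elsewhere in the span. All positions outside the mask are unchanged. Hence every innermost subexpression of $X^{(t+1)}$ is either an untouched reducible span or the semantic value of a reduced one, which is exactly the definition of validity. One also has to note that the intervening $\mathbf{0}$ vectors are ignored by the next iteration. They contribute nothing to the parenthesis counts $A^{(t)}$ under $G_{\mathrm{PE}}$, and they vanish under $W_Q, W_K$. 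So the next iteration sees the expression as if those positions were deleted.

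\textbf{Conclusion.} At $t = d(x)$ the invariant gives $\delta(X^{(d(x))}) = 0$, so no unreduced subexpression remains. The only nonzero content left is then a single value token in $\{\texttt{0},\texttt{1}\}$. Its global evaluation is itself and equals that of $X^{(0)}$, namely $\llbracket x \rrbracket$, so $X^{(d(x))} \equiv \llbracket x \rrbracket$. The edge case $d(x) = 0$ is immediate, since then $x$ is already a constant.
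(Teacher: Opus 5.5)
Your proposal is correct and follows the paper's proof: induction on $t$ with the invariant $X^{(t)} \equiv X^{(0)}$ and $\delta(X^{(t)}) = d(x)-t$, applying Theorem~\ref{thm:one-step-reduction} at each step, and reading off the remaining atom at $t = d(x)$. Your added validity invariant makes explicit a hypothesis of Theorem~\ref{thm:one-step-reduction} that the paper's induction uses without stating it. The cleanest way to state that invariant is: $X^{(t+1)}$ with its $\mathbf{0}$ rows deleted is the embedding of the fully parenthesized expression obtained by substituting values for the reduced subtrees. This phrasing also covers parent spans that become newly innermost, which are neither ``untouched'' nor values, so your two-case description misses them.
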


\begin{proof}
    The proof follows by applying Theorem~\ref{thm:one-step-reduction} inductively, and is deferred to Appendix \ref{app:proofs}.
\end{proof}


Importantly, this construction halts autonomously, as $\delta(X^{(t)})$ is computed within the PE mechanism (by Eq. \ref{eq:pe_astack}, when $\sum_{i,j}A_{ij}^{(t)}=0$ and there remain no parentheses in the sequence).
This transformer guarantees exact circuit evaluation, and we visualize the parameters for this model in Fig. \ref{afig:fullmodel}.

\begin{figure}[h]
  \centering
  \includegraphics[scale=1.0]{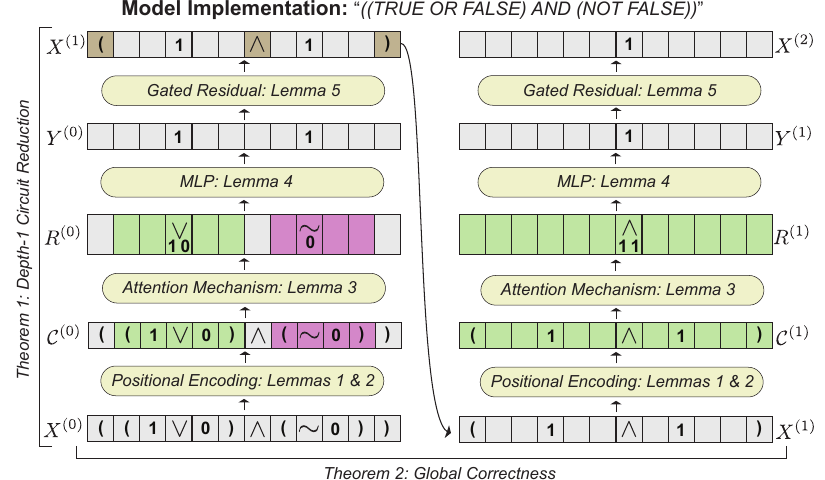}
  \caption{
  An illustrative example of Theorem \ref{thm:one-step-reduction} and \ref{thm:global-correctness}.
  Note that each token position is a $D$-dimensional embedding that encodes a one-hot or counts vector.
  Empty gray token boxes in the sequence denote the 0 vector.
  We visualize evaluation on the depth 2 expression ``\texttt{((1$\vee$0)$\wedge$($\sim$0))}''.
  }
  \label{fig:thm}
\end{figure}


\paragraph{Computational complexity}
Each transformer iteration has near-linear attention cost $O(n)$, with the dominant typed-routing step scaling as $O(kn)$ for $k$ reducible subexpressions, but effectively $O(n)$ for structured inputs since $k \ll n$. 
The number of model iterations is $d(x)$, yielding fewer iterations than standard token-by-token evaluation (e.g., in RNNs), which typically requires $\Theta(n)$ steps \citep{deletang_neural_2022} (Appendix \ref{app:complexity} for further details).
Thus, the total model complexity is $O(n \cdot d(x))$.
Because each additional level of nesting introduces at least one pair of parentheses and one operator, $d(x) \le n/3$.
This implies the cost is always sub-quadratic; the worst case $O(n^2)$ arises only for maximally unbalanced circuits with $d(x) = \Theta(n)$. 
Empirically, input length grows exponentially with depth ($d \approx \log n$), giving $d(x) = O(\log n)$ and an effective cost of $O(n \log n)$ (see Fig. \ref{fig:learn}C).

\section{Numerical Experiments}
\label{sec:empirical}

In this section, we demonstrate that this architecture learns compositional tasks and achieves perfect length generalization, extrapolating to arbitrary depths after training only on shallow instances.

\paragraph{Tasks.}
We evaluated three families of algorithmic tasks: modular arithmetic, Boolean logic, and ListOps \citep{nangia_listops_2018}.
In \textbf{modular arithmetic}, the input is a parenthesized expression over digits and the operators \texttt{+} and \texttt{*}, and the label is the value of the full expression modulo 10. 
For example, \texttt{(2 + 3)} maps to \texttt{5}, while \texttt{((2 + 3) * 4)} maps to \texttt{0} modulo 10. 
In \textbf{Boolean evaluation}, the input is a formula over \texttt{1}, \texttt{0}, $\wedge$, $\vee$, and $\sim$ and the label is its truth value. 
For instance, \texttt{(1$\wedge$0)} evaluates to \texttt{0}, while \texttt{($\sim$(1$\vee$0))} also evaluates to \texttt{0}. 
In \textbf{ListOps}, the input is a nested expression over digits with operators such as \texttt{MAX}, \texttt{MIN}, \texttt{SM} (sum modulo), and \texttt{MED}, and the label is the resulting digit. 
For example, \texttt{( MAX 2 7 4 )} evaluates to \texttt{7}, and \texttt{( SM 3 4 8 )} evaluates to \texttt{5} modulo 10, and \texttt{( MAX ( MIN 8 3 5 ) ( SM 4 7 6 ) )} evaluates to \texttt{7}.
Note for ListOps, we limited the number of  fan-in to 3 as our focus was on depth generalization.
See Appendix \ref{app:task} for further details.

\paragraph{Training.}
We train all learnable components of the universal transformer end-to-end from random initialization, including parameters: PE parameters $G_{\text{PE}}$, attention mechanism parameters, and the MLP.
The total number of parameters per task is visualized in Fig. \ref{fig:learn}B; the number of parameters per model is dictated by the size of the vocabulary required for the task and the MLP size, which scales with the finite lookup table).
The $G_{\text{PE}}$ is parameterized by continuous logits that are ternarized to \texttt{{-1,0,1}} with a straight through estimator.
The attention mechanism are likewise initialized from random logits, and binarized with a straight-through estimators in $W_Q$ and $W_K$, while $W_V=I_D$ is fixed (see Fig. \ref{afig:fullmodel}).
The feedforward module is an MLP with a single task-dependent expansive hidden layer (2x embedding dimension for Boolean and arithmetic evaluation and 8x for ListOps) and activation functions (quadratic activation function for Boolean and arithmetic, and ReLU for ListOps), and the output is generated from a gated normalization step to produce a one-hot vector that corresponds to a token in the vocabulary.
Training employed within-epoch curriculum learning: each epoch first trained exclusively on depth-1 expressions, then jointly on depths 1 and 2. 
Models were optimized on a Binary Cross Entropy objective function using Adam (learning rate 0.01) with a two-phase schedule (10\% linear warmup from 30\% to 100\% of base rate, then cosine annealing to 5\%) to address vanishing gradients in shallow expressions. 
Architecturally, models were the same across tasks with the exception of the embedding dimension and MLP size. 
We trained 10 random seeds per model. 
Additional details can be found in Appendix \ref{app:experimental-details}.

\begin{figure}[h]
  \centering
  \includegraphics[scale=1.1]{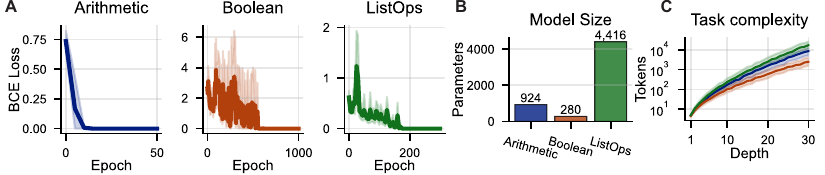}
  \caption{A) Averaged loss trajectories for arithmetic, Boolean, and ListOps tasks (including only seeds that converged).
  (Shaded regions indicate 95\% CI.)
  B) Parameter counts across each of the models, which are dictated by 1) the task vocabulary, and 2) the size of the MLP's hidden dimension. 
  C) The number of input tokens required to express tasks of a given depth. As depth linearly increases, the number of input tokens required to encode a problem rapidly increase (log-linear scale).
  }
  \label{fig:learn}
\end{figure}

\paragraph{Results.} 
Randomly initialized models converged for 10/10 seeds on the modular arithmetic task, and 9/10 seeds for Boolean and ListOps evaluations (Fig. \ref{fig:learn}A). 
During training, optimization proceeded until the model {\em snapped} into an exact solution with zero training loss.
We evaluated trained models on each task using samples with deeper depth than seen during training (Table \ref{table:results}). 
We limited evaluation to problems with up to depth 30 due to memory limitations (depth 30 ListOps problems had upwards of 41k tokens).
The \emph{model halts autonomously} when there remain no parentheses left in the sequence $X^{(t)}$ (i.e., by Eq. \ref{eq:pe_astack}, when $\sum_{i,j} A_{ij}^{(t)}=0$), with the number of iterations matching exactly the depth of the problem.
Critically, our models achieved {\bf 100\% generalization accuracy on all tasks}, irrespective of the problem depth or the number of tokens.
(We tested on 100 problems per depth sampled from a uniform distribution.)
Notably, prior works studying length generalization typically limit evaluations of up to 1k tokens, an order of magnitude less than the problems evaluated here \citep{li_functional_2024,cho_arithmetic_2025,zhou_what_2024,kazemnejad_impact_2023}.
Due to the compactness of the models, we are able to provide complete visualizations of the learned representations in Appendix Fig. \ref{afig:fullmodel}B,C), directly comparing the randomly initialized model, the learned model, and the constructed model defined Section \ref{sec:model}.
We note that while learned models exactly recover the prescribed PE gating parameters defined in the construction, the attention parameters ($W_Q, W_K$) often differed from our formulation due to solution degeneracy.
Nevertheless, all learned models exhibited perfect generalization for arbitrary depth algorithmic problems.

\begin{table}
\caption{We achieved 100\% generalization accuracy on all 3 tasks by training models from random initialization on depth 1 and 2 problems, and evaluating 100 problems at each depth from 3-30.}
\centering
\label{tab:model_summary}
\begin{tabular}{lcccc}
\toprule
Task & Avg Accuracy & Avg Tokens (SD) & Max Tokens & Depths (OOD) \\
\midrule
Arithmetic & \textbf{1.000} & 1937.89 (2725.32) & 18085 & 3-30 \\
Boolean & \textbf{1.000} & 710.85 (914.74) & 7441 & 3-30 \\
ListOps & \textbf{1.000} & 3852.161(5819.27) & 41389 & 3-30 \\
\bottomrule
\end{tabular}
\label{table:results}
\end{table}

\subsection{Comparison to standard architectures}
\label{sec:comparisons}
To test whether the parenthesized structure alone makes these tasks easy, we train a standard looped (weight-shared) transformer \citep{fan_looped_2025} with full softmax attention and learnable $W_Q, W_K, W_V$ under an identical protocol (fully parenthesized inputs, depth-1 and depth-2 curriculum, same iteration count, and matched hyperparameters). 
(We supply the ground-truth depth at test time, as these models do not autonomously halt.) 
We evaluate three positional encodings: learned PEs, RoPE \citep{su_roformer_2022}, and FIRE \citep{li_functional_2024}. 
We report accuracy for seeds that reach $\ge 0.95$ on both training depths. 
We found that these models still collapse towards chance immediately beyond the training depths (Table~\ref{tab:baseline}), while our construction remains exact. 
We further compare to CRvNN \citep{chowdhury2021modeling}, a continuous recursive network previously shown to length generalize well on similar tasks.
However, on ListOps, using our same shallow training protocol and despite roughly $500\times$ more parameters, it degrades from $0.67$ at depth 3 to $0.17$ at depth 10 despite fitting the training depths (Appendix~\ref{app:comparisons}).

\begin{table}
\caption{Comparison to a standard looped transformer. 
We report held-out accuracy at the last training depth ($d=2$) and two out-of-distribution depths ($d=3,10$); chance level in parentheses. 
Accuracy includes seeds that reach $\ge 0.95$ on both training depths. 
On arithmetic and ListOps, only 1/5 seeds fit this criterion per positional encoding within the current training budget. 
On Boolean, 5/5 (learned, RoPE) and 3/5 (FIRE). 
Every trained model still collapses toward chance out of distribution, whereas our construction remains exact.
See Appendix~\ref{app:comparisons} for further details. 
}
\label{tab:baseline}
\centering
\small
\setlength{\tabcolsep}{4pt}
\begin{tabular}{l ccc ccc ccc}
\toprule
 & \multicolumn{3}{c}{Arithmetic (0.10)} & \multicolumn{3}{c}{Boolean (0.50)} & \multicolumn{3}{c}{ListOps (0.10)} \\
\cmidrule(lr){2-4}\cmidrule(lr){5-7}\cmidrule(lr){8-10}
Model & depth-$2$ & depth-$3$ & depth-$10$ & depth-$2$ & depth-$3$ & depth-$10$ & depth-$2$ & depth-$3$ & depth-$10$ \\
\midrule
Ours & \textbf{1.00} & \textbf{1.00} & \textbf{1.00} & \textbf{1.00} & \textbf{1.00} & \textbf{1.00} & \textbf{1.00} & \textbf{1.00} & \textbf{1.00} \\
Looped, learned & 1.00 & 0.10 & 0.05 & 1.00 & 0.55 & 0.48 & 0.98 & 0.37 & 0.07 \\
Looped, RoPE   & 0.73 & 0.18 & 0.00 & 0.98 & 0.50 & 0.12 & 1.00 & 0.43 & 0.08 \\
Looped, FIRE   & 1.00 & 0.23 & 0.03 & 1.00 & 0.61 & 0.44 & 0.97 & 0.38 & 0.12 \\
\bottomrule
\end{tabular}
\end{table}

\section{Discussion and Conclusion}
\label{sec:discussion}


\paragraph{Significance.} 
We give a tiny transformer construction with hand chosen parameters that evaluates arbitrary-depth circuits via reduction.
Empirically, we show that when we randomly initialize our parameterization, models achieve perfect length generalization.
Transformers often fail at symbolic generalization despite strong empirical performance. Recent architectural variants improve length generalization but remain imperfect \citep{cho_arithmetic_2025, zhou_what_2024, fan_looped_2025, kazemnejad_impact_2023, hou_universal_2024}, suggesting a lack of faithful algorithmic implementation.
This empirical gap contrasts with the theoretical results showing that these models are universal in principle \citep{perez_attention_2021,merrill_expressive_2024}.
Yet, such results are largely non-constructive and provide limited insight into how exact algorithmic computation may be mechanistically realized or learned in practice.

We instead give a simple and interpretable construction, repurposing existing architectural primitives used in the literature to implement a structured algorithmic process -- a variant of contextual PE \citep{golovneva_contextual_2024} that delimits positions based on token occurrences, linear attention mechanisms \citep{katharopoulos_transformers_2020}, gated residuals as a variation on hyperconnections \citep{zhu_hyper-connections_2025}, and straight-through estimators throughout to simulate discrete algorithmic representations \citep{bengio_estimating_2013,van_den_oord_neural_2017}.
Experiments further show this behavior is learnable: dense random models acquire ternarized PE gating ($G_{\text{PE}}$), binary attention, and lookup-table MLPs, enabling exact in-context circuit evaluation.

\paragraph{Related work.} 
\emph{Compositional and length generalization:} Generalization to greater compositional complexity remains challenging \citep{ito_quantifying_2025,hupkes_compositionality_2020,lake_generalization_2018,zhou_what_2024,saxton_analysing_2019,kim_cogs_2020,dziri_faith_2023}. 
Prior work improves extrapolation via ad hoc architectural modifications, especially PE in transformers \citep{csordas_devil_2021,kazemnejad_impact_2023,ruoss_randomized_2023,shen_positional_2024,li_functional_2024,mcleish_transformers_2024,cho_arithmetic_2025,jelassi_length_2023}, scratchpads \citep{cho_arithmetic_2025,nye_show_2021}, looped/universal transformers \citep{fan_looped_2025,giannou_looped_2023}, and program compilation approaches \citep{weiss_thinking_2021,shaw_alta_2024}. 
While these improve empirical extrapolation, they lack guarantees of faithful computation at arbitrary depth or are not differentiable. 
Complementary theoretical analyses characterize transformer expressivity limits \citep{strobl_transformers_2024,merrill_little_2025,huang_formal_2025,amiri_lower_2025,yang_masked_2024,deletang_neural_2022}, but rarely provide actual constructions for exact computation.
\emph{Recursive and structured neural architectures:} A complementary line of work composes representations along latent tree structures, including recursive neural networks and their continuous relaxations such as CRvNN \citep{chowdhury2021modeling} (which we compare to directly, see Section~\ref{sec:comparisons}) and beam-tree recursion \citep{ray_chowdhury_efficient_2023}, together with the Neural Data Router \citep{csordas_neural_2022}; \citet{chowdhury2024design} lay out the design space between transformers and recursive nets. 
These models learn composition order through continuous probabilities, achieving strong but imperfect length generalization. 
Our construction instead occupies a maximally constrained point in this space -- deterministic composition order, hard gating, and adaptive halting with a correctness proof -- trading flexibility for exactness.
\emph{Graph neural networks and transformers:} Graph neural networks (GNNs) naturally model circuits via message passing over nodes and edges \citep{scarselli_graph_2009}, and have been studied for algorithmic reasoning \citep{selsam_learning_2019,xu_how_2019,loukas_what_2020}. 
However, standard GNNs require depth proportional to graph diameter and suffer from oversmoothing and oversquashing \citep{alon_bottleneck_2021,gravina_oversquashing_2025}. 
Graph transformers alleviate this with global attention mechanisms \citep{dwivedi_generalization_2021,rampasek_recipe_2023,ying_transformers_2021}, but require $O(n^2)$ cost and often rely on explicit predfined knowledge of the graph structure, rather than learning and computing it on-the-fly as we do here. 
In contrast, our approach uses learned PE gating and structured attention routing to simulate circuit reduction directly. 
\emph{Mechanistic interpretability:} Mechanistic interpretability aims to uncover circuits underlying model behavior \citep{sharkey_open_2025,olah_zoom_2020,olsson_-context_2022,elhage_mathematical_2021,adler_towards_2025}. 
While instructive, these analyses are typically performed post hoc (after learning) and lack guarantees of exact computation.

\paragraph{Limitations.} Our results rely on several assumptions that limit the scope of the current construction while highlighting directions for future work.
First, we assume inputs are fully parenthesized, well-formed expressions, enabling the identification of reducible subexpressions via locally-derived masks. 
Although this setting captures a broad class of algorithmic tasks, it excludes problems where structure must be inferred implicitly.
Fully parenthesized expressions form a bracketed, context-free (Dyck) language whose parse is explicit in the surface string.
The model therefore counts brackets rather than inferring the structure. 
Removing the brackets makes the structure latent -- a structure-induction problem that changes the premise of our theorems -- and natural language, in particular, is generally regarded as beyond context-free \citep{shieber1985evidence}.
Nevertheless, the core components of our construction -- token-dependent positional gating, subexpression masking, and type-constrained attention routing -- provide key ingredients for extending the approach to settings where circuit structure must be learned without parentheses or brackets.
Second, the model evaluates expressions through iterative local reductions and, therefore, does not directly support algorithms with inherently global sequential dependencies such as long addition with carry \citep{dziri_faith_2023,cho_arithmetic_2025}.
Addressing this limitation may require alternative routing mechanisms or an explicit representation of the underlying carry circuit structure, which can still be expressed combinatorially (e.g., Fig. 1 of \citet{dziri_faith_2023}).
Third, our construction also focuses on tasks with a finite fan-in circuit ($\leq 3$). 
Although higher fan-in operations can be reduced to binary compositions, doing so may increase depth and obscure the local semantic structure leveraged by our model.
Designing architectures that support general fan-in while preserving interpretability and generalization remains an important direction for future work.
Finally, our exactness guarantees rely on the discrete machinery of the construction: hardmax attention masking, discrete token states maintained by straight-through estimators, deterministic halting when no parentheses remain, and exact one-hot arithmetic. 
Nevertheless, the learned models remarkably realize these operations with finite-precision relaxations, where straight-through estimators recover the discrete solution.

\paragraph{Conclusion.} 
We present a single-layer, universal transformer that implements circuit computations via circuit reduction, achieving perfect length generalization on Boolean algebra, modular arithmetic, and ListOps evaluation. 
We demonstrate that the construction is learnable, requiring only 924 (arithmetic), 280 (Boolean), and 4416 (ListOps) parameters.
Importantly, this ability is enabled through a novel PE gating mechanism, which identifies reducible subexpressions by tracking problem depth.
This mechanism induces hard attention masking (i.e., scoping) over the input sequence, enforcing structured information flow within the transformer.
It will be important for future work to explore how such hard masking mechanisms can be learned to adapt dynamically, enabling transformers to discover latent structural boundaries (i.e., structure induction) in more complex, unstructured domains.

\bibliography{mybib}


\appendix

\input{appendix}

\end{document}

%% file: appendix.tex
\setcounter{figure}{0}

\renewcommand{\thefigure}{A.\arabic{figure}}

\begin{figure}[h]
  \centering
  \includegraphics[scale=1.0]{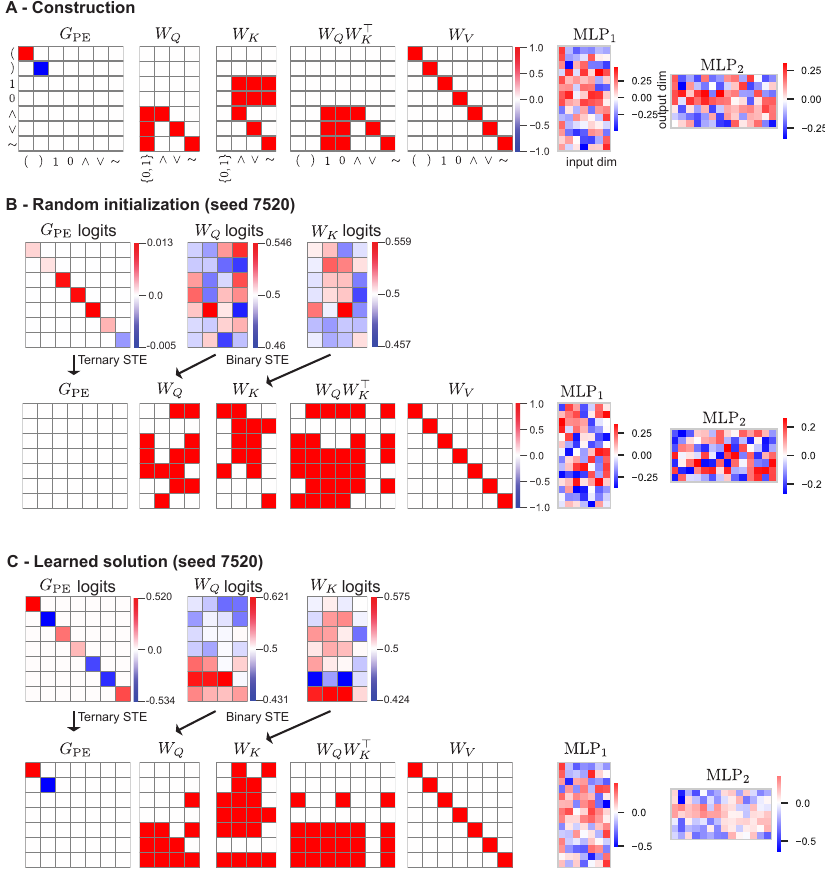}
  \caption{Visualization of all Boolean model parameters for the constructed (formal) solution, a randomly initialized model (before training), and the learned model that exhibits perfect depth generalization.
  Note that for the randomly initialized and learned model, we visualize both the raw logits, as well as the parameter after applying the threshold for binarization/ternarization.
  MLP bias terms are not visualized here.
  A) Our construction of the model that implements circuit reduction, as specified in Theorem \ref{thm:one-step-reduction}-\ref{thm:global-correctness}.
  Here we visualize a learned MLP that is trained on a Boolean look-up table (the 8 depth-1 Boolean trees).
  ($\text{MLP}_1$ refers to the input-to-hidden weights, and $\text{MLP}_2$ refers to the hidden-to-output mapping.)
  B) A randomly initialized model before training for a single seed (7520). 
  Note that logits are typically initialized near or below the straight-through estimator thresholds.
  C) The learned model after training (seed 7520).
  Note that while the model does not learn the exact same parameterization as our constructed solution, this model still exhibits perfect generalization.
  }
  \label{afig:fullmodel}
\end{figure}

\begin{table}[t]
\caption{Model Performance Summary on all 3 tasks fo every depth. Note that depth 1 and 2 are in the training distribution.}
\label{atab:performance_alldepth}
\centering
\begin{tabular}{lcccccc}
\toprule
& \multicolumn{2}{c}{\textbf{Arithmetic}} & \multicolumn{2}{c}{\textbf{Boolean}} & \multicolumn{2}{c}{\textbf{ListOps}} \\
\cmidrule(lr){2-3}\cmidrule(lr){4-5}\cmidrule(lr){6-7}
Depth & Acc. & Tokens & Acc. & Tokens & Acc. & Tokens \\
\midrule
1  & 1.000 & 5.00   & 1.000 & 4.63   & 1.000 & 5.00 \\
2  & 1.000 & 9.64   & 1.000 & 9.00   & 1.000 & 11.28 \\
3  & 1.000 & 16.84  & 1.000 & 14.29  & 1.000 & 19.56 \\
4  & 1.000 & 25.96  & 1.000 & 21.44  & 1.000 & 31.12 \\
5  & 1.000 & 36.92  & 1.000 & 28.90  & 1.000 & 50.52 \\
6  & 1.000 & 54.76  & 1.000 & 37.25  & 1.000 & 70.52 \\
7  & 1.000 & 74.32  & 1.000 & 50.54  & 1.000 & 102.36 \\
8  & 1.000 & 97.64  & 1.000 & 66.31  & 1.000 & 146.80 \\
9  & 1.000 & 127.56 & 1.000 & 82.78  & 1.000 & 178.92 \\
10 & 1.000 & 173.12 & 1.000 & 110.18 & 1.000 & 259.76 \\
11 & 1.000 & 218.88 & 1.000 & 122.81 & 1.000 & 329.64 \\
12 & 1.000 & 288.32 & 1.000 & 168.45 & 1.000 & 418.76 \\
13 & 1.000 & 316.32 & 1.000 & 191.55 & 1.000 & 559.48 \\
14 & 1.000 & 459.72 & 1.000 & 224.85 & 1.000 & 727.52 \\
15 & 1.000 & 570.84 & 1.000 & 295.72 & 1.000 & 961.68 \\
16 & 1.000 & 665.28 & 1.000 & 311.39 & 1.000 & 1086.52 \\
17 & 1.000 & 896.12 & 1.000 & 421.03 & 1.000 & 1493.24 \\
18 & 1.000 & 1025.92& 1.000 & 458.13 & 1.000 & 1722.08 \\
19 & 1.000 & 1254.52& 1.000 & 535.26 & 1.000 & 2239.52 \\
20 & 1.000 & 1572.56& 1.000 & 659.82 & 1.000 & 2746.20 \\
21 & 1.000 & 1892.24& 1.000 & 749.34 & 1.000 & 3584.56 \\
22 & 1.000 & 2130.96& 1.000 & 919.54 & 1.000 & 4100.28 \\
23 & 1.000 & 2654.28& 1.000 & 1057.65& 1.000 & 5199.72 \\
24 & 1.000 & 3022.92& 1.000 & 1217.01& 1.000 & 6126.32 \\
25 & 1.000 & 4068.36& 1.000 & 1496.06& 1.000 & 7593.32 \\
26 & 1.000 & 4560.12& 1.000 & 1553.46& 1.000 & 8747.92 \\
27 & 1.000 & 5484.48& 1.000 & 1859.65& 1.000 & 11437.36 \\
28 & 1.000 & 6442.20& 1.000 & 1954.27& 1.000 & 14215.60 \\
29 & 1.000 & 7620.96& 1.000 & 2548.09& 1.000 & 15335.56 \\
30 & 1.000 & 8509.00& 1.000 & 2748.17& 1.000 & 18375.68 \\
\bottomrule
\end{tabular}
\end{table}

\section{Model construction details}

\subsection{Computational complexity of model construction}
\label{app:complexity}
Theorem~\ref{thm:one-step-reduction} characterizes the cost of a single universal-transformer iteration.
For an input of length $n$, reduction mask construction requires parenthesis counting and thresholding over the sequence, which is linear in $n$ (Lemma \ref{lem:reduction-mask}). 
Subexpression decomposition is likewise linear in $n$, since it consists of a discrete difference and cumulative sum over the binary mask (Lemma \ref{lem:cluster-correctness}). 
The dominant cost is the typed routing step, whose linear-attention implementation scales as $O(kn)$, where $k$ is the number of parallel reducible subexpressions. 
For the parenthesized expressions considered here -- where $k \ll n$ -- the effective cost is near-linear in $n$. 
Thus, the dominant cost per iteration is $O(n)$, aside from storing the fixed routing parameters (e.g., attention head dimension $d_h$).

By Theorem~\ref{thm:global-correctness}, the number of transformer iterations required to evaluate an expression $x$ requires exactly $d(x)$ iterations, where $d(x)$ is the depth of the original expression.
Because each additional level of nesting introduces at least one pair of parentheses and one operator, $d(x) \le n/3$.
This implies that the total cost $O(n \cdot d(x))$ is always sub-quadratic; the worst case $O(n^2)$ arises only for maximally unbalanced expressions with $d(x) = \Theta(n)$.
For balanced expressions $d(x) = O(\log n)$, giving an effective total cost of $O(n \log n)$.
In this sense, the construction has lower sequential depth than standard sequential token-by-token evaluation procedures, such as stack-based parsing or left-to-right symbolic evaluation, which typically require $\Theta(n)$ sequential steps on an input of length $n$ (e.g., see \citet{deletang_neural_2022}).

\subsection{Additional proofs for model construction}
\label{app:proofs}
\mlplemma*

\begin{proof}
\label{app:proof_mlp}
The set of valid reducible subexpressions over fan-in-2 Boolean trees is finite; enumeration yields exactly 8 distinct cases ($\wedge,\vee,\sim$ operators applied to \texttt{0}, \texttt{1} operands.
We list these explicitly as input-output pairs $(\mathbf{b}_i, \mathbf{y}_i )_{i=1}^8$, where $\mathbf{b}_i \in \mathbb{Z}^D$ is the bag-of-words representation of the $i$-th subexpression and $\mathbf{y}_i \in \{0,1\}^D$ is its evaluated output embedding.
Any input $\mathbf{b}$ not matching a valid reducible expression maps to $\mathbf{0}^D$.

Since the domain $\{\mathbf{b}_1,...,\mathbf{b}_8\}$ is finite, $f$ is simply a lookup table on 8 entries.
By the universal approximation theorem \citet{hornik_multilayer_1989}, any function on a finite domain is exactly realizable by a single-hidden-layer MLP with sufficiently many units and a suitable activation function.
Hence, such an $f$ exists.
\end{proof}

\gatedlemma*
\begin{proof}
By Lemma \ref{lem:reduction-mask}, $P^{(t)}$ denotes the mask of reducible subexpressions in $X^{(t)}$, and let $Y^{(t)} \in \{0,1\}^{n \times V}$ denote the sequence output vector after the feedforward layer (Lemma \ref{lem:mlp}). 
For each position $i$,
$$
X^{(t+1)}_i = M^{(t)}_i X^{(t)}_i + P^{(t)}_i Y^{(t)}_i.
$$

Since $M^{(t)}_i = 1 - P^{(t)}_i$ and $P^{(t)}_i \in \{0,1\}$, exactly one of $M^{(t)}_i$ or $P^{(t)}_i$ equals 1. 
Thus, if $P^{(t)}_i = 0$, then $X^{(t+1)}_i = X^{(t)}_i$, and if $P^{(t)}_i = 1$, then $X^{(t+1)}_i = Y^{(t)}_i$.
Thus, positions outside the reduction mask are preserved and those inside are updated.
\end{proof}

\thmone*
\begin{proof}
We show that one iteration evaluates all deepest reducible subexpressions while preserving all other tokens.
By Lemma \ref{lem:reduction-mask}, the reduction mask identifies exactly the deepest reducible subexpressions. 
By Lemma \ref{lem:cluster-correctness}, parallel subexpression decomposition isolates independent subexpressions. 
By Lemma \ref{lem:attn}, the attention mechanism takes each independent subexpression, and sums together local operand and operator embeddings at the operator position creating a bag-of-words vector embedding.
Tokens across independent subexpressions do not interfere with subexpressions, because they are disjoint subsequences (Lemma \ref{lem:cluster-correctness}).
By Lemma \ref{lem:mlp}, the bag-of-words embedding vector at the operator's token position is evaluated to its semantic value; all other tokens within that subexpression are set to the 0 vector. 
By Lemma \ref{lem:residual-preservation} the residual update preserves all remaining non-reduced positions. 
Since the MLP reduces all reducible subexpressions (Lemma \ref{lem:mlp}) and the gated residual connection preserves all other tokens (Lemma \ref{lem:residual-preservation}), this implies $X^{(t+1)} \equiv X^{(t)}$.
Since only the deepest subexpressions are reduced, all nodes at maximal depth are eliminated and no deeper nodes remain (parentheses of the deepest subexpressions are replaced by 0 vectors by Lemma \ref{lem:mlp}).
This yields $\delta(X^{(t+1)}) = \delta(X^{(t)}) - 1$. 
Hence one iteration performs a parallel local reduction step, preserving the global evaluation and decreasing active depth by one.
\end{proof}

\thmtwo*
\begin{proof}
We apply Theorem \ref{thm:one-step-reduction} inductively on $t$.
At $t=0$, $X^{(0)}$ corresponds to $x$, so $\llbracket X^{(0)} \rrbracket \equiv \llbracket X^{(0)} \rrbracket$ trivially with $\delta(X^{(0)})=d(x)$.

Assume for $t<d(x)$ that
$$
X^{(t)} \equiv X^{(0)} \quad \text{and} \quad \delta(X^{(t)}) = d(x) - t.
$$

Since $\delta(X^{(t)})>0$, we can apply Theorem \ref{thm:one-step-reduction}, yielding

$$
X^{(t+1)} \equiv X^{(t)} \quad \text{and} \quad \delta(X^{(t+1)}) = \delta(X^{(t)})-1
$$

By induction
$$X^{(t+1)} \equiv X^{(0)} \quad \text{and} \quad \delta(X^{(t+1)}) = d(x)-(t+1)$$

Thus, $\forall t \le d(x)$
$$X^{(t)} \equiv X^{(0)} \quad \text{and} \quad \delta(X^{(t)}) = d(x) - t$$

When $t=d(x)$, we have $\delta(X^{(d(x))})=0$, so no reducible subexpression remains, and $X^{(d(x))}$ is a Boolean atom (i.e., \texttt{\{0,1\}}).
Since semantic equivalence is preserved throughout, we have
$$
X^{(d(x))} \equiv \llbracket x \rrbracket$$
\end{proof}

\section{Experimental Details}
\label{app:experimental-details}

\subsection{Task details}
\label{app:task}

\paragraph{Arithmetic.}
Each arithmetic example is a fully parenthesized expression built recursively from digits \(\{0,\dots,9\}\) and the binary operators \texttt{+} and \texttt{*}. 
The output is the numerical value of the expression modulo \(10\). 
Depth-0 instances are single digits. 
Depth-1 instances contain a single operation, e.g., \texttt{(7 * 8)} \(\mapsto 6\). 
Larger-depth instances are formed by nesting subexpressions, e.g., \texttt{((1 + 2) * (3 + 4))} \(\mapsto 1\). 
To illustrate the complexity of deeper problems, a depth-3 example is \texttt{(((1 + 2) * (3 + 4)) + (5 * (6 + 7)))} \(\mapsto 6\).
On average, the number of tokens exponentially increases as the problem's depth increases.

\paragraph{Boolean.}
Each Boolean example is a recursively defined expression over leaves \texttt{TRUE} and \texttt{FALSE}, unary \texttt{NOT}, and binary \texttt{AND}/\texttt{OR}. 
The output is the truth value of the full formula. 
A depth-3 formula \texttt{(((NOT FALSE) AND TRUE) OR (FALSE AND (NOT TRUE)))} \(\mapsto\) \texttt{TRUE}. 

\paragraph{ListOps.}
Each ListOps example is a nested expression whose internal nodes are operators and whose leaves are digits \(0\) through \(9\) (see \citet{nangia_listops_2018}; MIT License). 
The operators are \texttt{MAX} (maximum), \texttt{MIN} (minimum), \texttt{SM} (sum modulo \(10\)), and \texttt{MED} (median). 
For example, \texttt{( MIN 8 3 5 )} \(\mapsto 3\), \texttt{( MAX 1 9 4 )} \(\mapsto 9\), and \texttt{( MED 8 2 5 )} \(\mapsto 5\). 
A nested example is \texttt{( MAX ( MIN 8 3 5 ) ( SM 4 7 6 ) )}, which evaluates to \texttt{7}, since the inner expressions yield \(3\) and \(7\), respectively. 
Note for our experiments, we limited the fan-in (or number of arguments per operator) to max 3. 
This meant randomly sampled operators could have either 2 or 3 arguments.
By default, operator \texttt{MED} chose the floor operand for fan-in 2 (or for sequences with repeating operands).

Note that Lemma $\ref{lem:cluster-correctness}$ assumes that independently evaluable subexpressions are non-adjacent, which holds for in-fix grammars such as Boolean formulas and modular arithmetic considered here.
However, for prefix grammars, such as ListOps, subexpressions are adjacent (i.e., there is no separation between the closing parenthesis of one expression and the opening parenthesis of the next).
In principle, this solution can be addressed by tokenizing whitespace -- for example, even assigning a $\mathbf{0}$ vector to whitespace tokens. 
However, this approach doubles the total number of tokens.
Thus, in practice, we instead tokenize closing parentheses using two tokens: a one-hot encoded vector followed by an adjacent $\mathbf{0}$ vector.
This increases the number of input tokens in proportion to the number of closed parentheses.

\subsection{Model Training}

\paragraph{Overview.}
All models were trained end-to-end from random initialization using iterative computation, where the number of iterations was specified by problem depth.
Training therefore directly optimizes the full multi-step reduction procedure rather than a surrogate objective on intermediate states.

\paragraph{Domains.} We conducted experiments across three symbolic evaluation tasks:
\begin{itemize}
    \item \textbf{Boolean logic}: Expressions over the vocabulary $\Sigma_{\mathrm{bool}} = \{\texttt{(}, \texttt{)}, \texttt{1}, \texttt{0}, \wedge, \vee, \sim\}$ with operators $\{\wedge, \vee, \sim\}$.
    \item \textbf{Modular arithmetic}: Expressions over $\Sigma_{\mathrm{arith}} = \{\texttt{(}, \texttt{)}, 0, 1, \ldots, 9, \texttt{+}, \texttt{*}\}$ evaluated modulo 10.
    \item \textbf{ListOps}: Hierarchical list operations following the benchmark of \citet{nangia_listops_2018}, which tests compositional reasoning over nested list structures.
\end{itemize}

\paragraph{Input representation and initialization.}
Tokens are represented as fixed one-hot vectors.
In the experiments reported, the model dimension equals the vocabulary size for the corresponding task domain.
The model operates directly on the one-hot representations.
However, the model can similarly operate on dense input embeddings via an orthogonal embedding map.
All learnable parameters are initialized randomly with small magnitude so that training begins near the discrete decision boundaries used by the straight-through estimators.

\paragraph{Positional encoding gate.}
The PE mechanism is parameterized by a learnable vector of continuous logits, one per embedding dimension. 
During training, these logits are converted to ternary values in $\{-1,0,1\}$ using fixed thresholds of $-0.5$ and $0.5$.
In the forward pass, dimensions with logits below $-0.5$ are mapped to $-1$, while dimensions above the upper threshold are mapped to $+1$, and the remaining dimensions are mapped to $0$.
The resulting diagonal gate matrix is then used to count open and closed parentheses to identify the most deeply nested subexpressions.
Importantly, when there are no remaining parentheses the model halts.
Although the forward PE is discrete, gradients are passed to the underlying logits as if the quantization step were the identity, enabling standard gradient-based optimization.

\paragraph{Attention mechanism.}
The query and key matrices $W_Q$ and $W_K$ are learned as binary matrices using straight-through estimators. 
Each matrix is parameterized by a real-valued logit matrix. 
During training, the logits are passed through a sigmoid and thresholded at $0.5$ to obtain a binary routing pattern, while gradients flow through the continuous sigmoid.
This setup encourages sparse routing patterns that approximate the constructive circuit design.
We fix the value matrix $W_V$ to the identity, so attention only redistributes existing token information rather than learning a separate value transformation.

\paragraph{Feedforward layer.}
After attention, the resulting local count-like representation is processed by a standard two-layer MLP.
This is the only dense component of the model. 
For Boolean and arithmetic tasks, we use a hidden dimension equal to $2D$ and a quadratic activation. 
For ListOps, we use a wider hidden layer of size $8D$ together with a ReLU activation.
(Note here that the MLP embedding dimensions were determined by simple experimentation, to identify when the model would converge.)
The role of the MLP is to map routed local representations to the semantic value of a subexpression, i.e., a finite-lookup table for each task domain.
After the MLP, the model applies a gated normalization step to generate a one-hot vector that corresponds to the look-up table.
In the forward pass, activations are thresholded to $\{0,1\}$; in the backward pass, gradients are propagated through the corresponding continuous activations (another straight-through estimator).
This encourages the internal token states to remain discrete symbolic representations across iterations, which is key for a single transformer iteration to implement an exact, depth-1 circuit reduction.

\paragraph{Straight-through vs.\ Gumbel-Softmax.} We also experimented with a Gumbel-Softmax relaxation in place of the straight-through estimators for the discrete components. 
Training was considerably less stable; models did not reliably \emph{snap} to the exact solution, in contrast to the straight-through estimators used throughout. 
A dedicated study of discrete-gradient estimators for this class of constructions would be an interesting direction for future work.

\paragraph{Gated residual update.}
We include a gated residual update, whereby non-reducible portions of the input string (i.e., tokens outside the innermost expressions) are residually added to the output of the feedforward layer.
Success of this update is determined by the success of learning the correct PE gating matrix $G_{\text{PE}}$ (see \ref{eq:pe_p}). 

\paragraph{Training Data and Optimization.}
For each domain, we generated fully enumerated training sets containing all possible valid expressions at depths 1 and 2. 
(However, for ListOps, because enumerating all possible depth 2 circuits was computationally inefficient due to the large combinatorial space (ListOps problems had up to 3 arguments), we opted to uniformly randomly sample new sets of depth 2 circuits after each epoch.
The size of the depth-2 dataset we sampled for each epoch was 2x the size of the enumerated depth-1 task set, which was 4400 samples for all depth-1 problems, and thus we randomly sampled 8800 depth-2 ListOps problems.)
The enumeration of depth-1 circuits ensured complete coverage of shallow compositional patterns while maintaining tractable dataset sizes. 
Training sets were balanced across output classes (e.g., TRUE/FALSE for Boolean, 0-9 for arithmetic) to prevent class imbalance during optimization.
Models were optimized with Adam using a Binary Cross Entropy loss.
Rather than calculating the loss from a single classification token, \emph{the loss was calculated across all tokens in the sequence}.
In particular, the target sequence was generated following the circuit reduction procedure we described in Theorem \ref{thm:one-step-reduction} -- the operator token at the highest (shallowest) token position encoded the final target class, while the embeddings of all other tokens in the sequence were trained to be the 0 vector.
While this required us to keep track of which token position encoded the correct target class in the training data, at test time, the max pooling of the output's embedding dimensions of size $n \times D$ across all $n$ tokens exactly encoded the correct one-hot token embedding.
This was because non-output token positions were optimized to be the 0 vector.
Across experiments, we used a learning rate of $0.01$, batch size of $500$, no dropout, and gradient clipping with a maximum norm of 1.0.
Note that for depth 1 circuits, the number of total samples was typically fewer than $500$. 
In those cases, batch size would reduce to the total number of enumerated depth 1 samples.
We trained models for a fixed number of epochs, which were dictated by the number of total samples per circuit depth: 50 epochs for modular arithmetic, 1000 for boolean expressions, and 300 for ListOps.

\paragraph{Curriculum learning.} 
We employed a progressive curriculum learning strategy. 
Within each epoch, the model first trained exclusively on depth-1 expressions, then on both depth-1 and depth-2 expressions together. 
This curriculum approach helped the model first learn the fundamental depth-1 reduction operations before tackling more complex nested structures (which were aimed at learning the PE gating parameters).

\paragraph{Learning rate schedule.} To address vanishing gradient issues in shallow expressions (depths 1-2), we employed a two-phase learning rate schedule:
\begin{enumerate}[leftmargin=2em]
    \item \textbf{Warmup phase} (10\% of total steps): Linear warmup from 30\% to 100\% of the base learning rate. 
    \item \textbf{Cosine annealing phase} (remaining 90\% of steps): Cosine decay from the peak learning rate to 5\% of the base rate ($\eta_{\min} = 0.0005$). 
\end{enumerate}


\paragraph{Computational environment.} 
All experiments could be efficiently implemented on a CPU (Macbook).
This ensured numerical stability and reproducibility. 
(Anecdotally, we observed that CPU training produced more consistent convergence behavior across random seeds, likely due to deterministic floating-point operations.)
Each seed took no longer than 2 hours to train per seed (Boolean models converged within 2 hours for all 10 seeds).
However, once learned, we evaluated models (particularly with large context inputs) on an H100 GPU, achieving perfect performance.

\paragraph{Random seeds.} We trained 10 independent model instances per domain using the following random seeds: 8739, 7520, 1936, 7439, 6210, 5650, 5395, 3140, 1243, 5353. 
These seeds controlled initialization of model parameters, data shuffling, and any stochastic operations during training. 
Seed 1936 failed to converge for Boolean evaluation, and seed 7439 failed to converge for ListOps.

\subsection{Evaluation Protocol}

\paragraph{Generalization testing.} After training on depths 1-2, models were evaluated on held-out test sets at depths 3-30.
These deeper expressions were never seen during training, allowing us to measure depth/length generalization.

\subsection{Code \& Implementation Details}

The code required to run experiments is provided in the supplementary zip file, along with a README and several python notebooks.
The trained model required to produce visualizations in Fig. \ref{afig:fullmodel} is also included, along with the corresponding notebook \path{viz_boolean_params.ipynb}.

For reproducibility, the exact command used to train models was:

\begin{verbatim}
python run_transformer_enumerated.py \
    --model_type {boolean,arithmetic,listops} \
    --seeds 8739 7520 1936 7439 6210 5650 5395 3140 1243 5353 \
    --min_train_depth 1 \
    --max_train_depth 2 \
    --n_epochs {1000,50,300} \
    --learning_rate 0.01 \
    --wdecay 0.0 \
    --device cpu \
    --learnable_pe \
    --learnable_attn \
    --learnable_mlp \
    --optimizer adamw \
    --train_batch_size 500 \
    --balanced \
    --loss bce
\end{verbatim}

where \texttt{model\_type} was set to \texttt{boolean}, \texttt{arithmetic}, or \texttt{listops} for the respective experiments.

\section{Comparison to standard architectures: details}
\label{app:comparisons}

We expand the comparison summarized in Section~\ref{sec:comparisons} and Table~\ref{tab:baseline} in this section. 
The looped-transformer baseline is a standard weight-shared transformer with full softmax attention and learnable $W_Q, W_K, W_V$, trained on the same fully parenthesized inputs, the same depth-1 and depth-2 curriculum, and the same per-task epoch budget as our own models (50 epochs for arithmetic, 1000 for Boolean, 300 for ListOps) \citep{fan_looped_2025}. 
Because the baselines do not autonomously halt, we supply the ground-truth depth at test time and run exactly that many iterations. 
We evaluated five seeds per positional encoding (8739, 7520, 1936, 7439, 6210) and selected the best checkpoint by training accuracy.

We report accuracy conditioned on the seeds that reach $\ge 0.95$ on both training depths. 
For arithmetic and ListOps, only one seed per positional encoding meets this criterion; for Boolean, all five seeds fit under learned and RoPE encodings, and three of five under FIRE. 
Visual inspection of the training trajectories showed that the non-fitting seeds are not fully converged after the fixed number of epochs. 
We therefore do not claim that the baseline architecture \emph{cannot} fit the training depths; rather, among seeds that do fit within a matched budget, every one collapses toward chance out of distribution (Table~\ref{atab:baseline_full}).

\begin{table}
\caption{Full generalization accuracy of the looped-transformer baseline across depths, per task and positional encoding, including only fitted seeds. Chance level is 0.10 for arithmetic and ListOps, and 0.50 for Boolean. Our construction achieves 1.0 at every depth on all three tasks (Table~\ref{table:results}).}
\label{atab:baseline_full}
\centering
\small
\begin{tabular}{lccccc}
\toprule
Positional encoding & $d\!=\!1$ & $d\!=\!2$ & $d\!=\!3$ & $d\!=\!5$ & $d\!=\!10$ \\
\midrule
\multicolumn{6}{l}{\emph{Arithmetic (chance 0.10)}} \\
Learned & 1.00 & 1.00 & 0.10 & 0.05 & 0.05 \\
RoPE    & 1.00 & 0.73 & 0.18 & 0.18 & 0.00 \\
FIRE    & 1.00 & 1.00 & 0.23 & 0.13 & 0.03 \\
\midrule
\multicolumn{6}{l}{\emph{Boolean (chance 0.50)}} \\
Learned & 1.00 & 1.00 & 0.55 & 0.50 & 0.48 \\
RoPE    & 1.00 & 0.98 & 0.50 & 0.27 & 0.12 \\
FIRE    & 1.00 & 1.00 & 0.61 & 0.48 & 0.44 \\
\midrule
\multicolumn{6}{l}{\emph{ListOps (chance 0.10)}} \\
Learned & 1.00 & 0.98 & 0.37 & 0.12 & 0.07 \\
RoPE    & 1.00 & 1.00 & 0.43 & 0.15 & 0.08 \\
FIRE    & 1.00 & 0.97 & 0.38 & 0.20 & 0.12 \\
\bottomrule
\end{tabular}
\end{table}

\paragraph{Continuous recursive networks (CRvNN).}
We additionally compared to CRvNN \citep{chowdhury2021modeling}, a continuous relaxation of recursive neural networks, on the ListOps task under the same fan-in and depth 1 and 2 training protocol. 
We used the original architecture unmodified (521,611 parameters, roughly $500\times$ ours) and trained five seeds, selecting the best checkpoint by validation accuracy. 
CRvNN fitted the training depths but degraded out of distribution (Table~\ref{atab:crvnn}), in contrast to the exact generalization of our construction. 
This mirrors the looped-transformer result: expressive, higher-capacity models fitted the shallow training depths yet failed to generalize exactly, whereas our constrained construction remains exact at every depth.

\begin{table}
\caption{CRvNN \citep{chowdhury2021modeling} versus our construction on ListOps (10-way classification, chance 0.10), reported as mean (standard deviation) over five seeds. 
CRvNN has 521,611 parameters. Our construction attained 1.000 at every depth.}
\label{atab:crvnn}
\centering
\small
\setlength{\tabcolsep}{4pt}
\begin{tabular}{lccccccc}
\toprule
Model & $d\!=\!1$ & $d\!=\!2$ & $d\!=\!3$ & $d\!=\!4$ & $d\!=\!5$ & $d\!=\!8$ & $d\!=\!10$ \\
\midrule
Ours & \textbf{1.000} & \textbf{1.000} & \textbf{1.000} & \textbf{1.000} & \textbf{1.000} & \textbf{1.000} & \textbf{1.000} \\
CRvNN & 1.000 (.000) & 0.961 (.060) & 0.674 (.186) & 0.431 (.110) & 0.302 (.065) & 0.177 (.071) & 0.169 (.056) \\
\bottomrule
\end{tabular}
\end{table}